\pdfoutput=1 
\documentclass[11pt]{article}

\usepackage[font=libertinus, citestyle=authoryear]{kurbanlab}

\DeclareAffiliation{hbku}{%
  College of Science and Engineering, Hamad Bin Khalifa University, Doha, Qatar}

\DeclareAffiliation{tamu}{%
  Department of Computer and Electrical Engineering,
  Texas A\&M University, College Station, TX, USA}

\DeclareAffiliation{iub}{%
  Luddy School of Informatics, Computing, and Engineering,
  Indiana University Bloomington, Bloomington, IN, USA}

\usetikzlibrary{arrows.meta,positioning,calc,backgrounds}
\usepackage{colortbl}                       
\definecolor{cx}{HTML}{2166AC}              
\definecolor{cxt}{HTML}{D6604D}
\definecolor{chl}{HTML}{1B7837}
\newcommand{\Om}{\ensuremath{\Omega}}
\newcommand{\Gcov}{\ensuremath{\Gamma_{\mathrm{cov}}}}
\newcommand{\Goov}{\ensuremath{\Gamma_{\mathrm{oov}}}}
\newcommand{\Vm}{\ensuremath{V}}

\title{The Spectrum Is Not Enough}
\Subtitle{When Context Helps Time-Series Forecasting}
\RunningTitle{The Spectrum Is Not Enough}

\Author{Mert Onur Cakiroglu}{iub}
\Author{Mehmet Dalkilic}{iub}
\Author[corresponding=hkurban@hbku.edu.qa, orcid=0000-0003-3142-2866]{Hasan Kurban}{hbku}

\Keywords{time-series forecasting; predictability; phase randomization; surrogate data;
          retrieval-augmented forecasting; foundation models}
\CodeURL{https://github.com/KurbanIntelligenceLab/SINE}

\begin{document}
\maketitle

\begin{abstract}
A growing family of indices scores how predictable a series is from its spectrum. Practitioners increasingly read these scores as answering a different question: whether \emph{adding context}, a longer lookback, a retrieval plug-in, or a pretrained model, will help. These are not the same question. The value of context is a property of the operating point, not of the series. Any index built from the power spectrum is invariant under phase randomization, whereas the beyond-second-order value that retrieval and foundation models supply is not, because a phase-randomized series is asymptotically Gaussian. We state this as an impossibility result and isolate it with surrogate pairs that fix the spectrum and the marginal by construction. We then give a label-free, configuration-level diagnostic, the coverage deficit, whose principal term measures beyond-spectrum structure as the gain of analog over linear prediction. On seven benchmarks the prediction holds: window-keyed retrieval's value collapses across surrogate pairs (ECL median $+33\%\!\to\!-35\%$, $p{<}10^{-40}$) while every spectral index stays frozen; a foundation model's value splits into a surviving second-order part and a small beyond-linear margin that collapses; a longer linear window's value survives. Leave-one-dataset-out, the structure term predicts the sign of beyond-spectrum value where the spectral indices trail it, and the reverse holds for the second-order mechanism. We introduce no new forecaster; the contribution is the distinction, a controlled comparison, and a diagnostic for the deployment decision. Code: \url{https://github.com/KurbanIntelligenceLab/SINE}.
\end{abstract}

\printkeywords

\section{Introduction}
\label{sec:intro}
Two series can be equally predictable yet differ in whether a longer history improves forecasts. This distinction is at odds with how a growing family of predictability measures is being applied.

Recent work scores the \emph{predictability of a series} with a single inexpensive scalar: spectral predictability, which reports that large pretrained models outperform light baselines when it is high \citep{omega2026}; a minimum achievable error from second-order structure with a computable spectral surrogate \citep{scp2025}; an accuracy law relating window-wise complexity to the smallest error deep models reach \citep{accuracylaw2025}; and conditional-entropy forecastability profiles over horizons \citep{catt2026profile}. These address a well-posed question: how predictable a series is in principle.

\begin{figure}[t]
\centering
\resizebox{\textwidth}{!}{%
\begin{tikzpicture}[
  font=\sffamily\footnotesize,
  >={Latex[length=2.0mm]},
  arr/.style={->,line width=0.8pt,black!45},
  wpanel/.style={rounded corners=3pt,line width=0.9pt},
  base/.style={black!16,line width=0.55pt},
  plab/.style={font=\sffamily\bfseries\small,text=black!65}]

\node[plab] at (0.15,3.70){a};
\begin{scope}[shift={(0.36,1.42)}]
  \draw[base] (-0.05,0)--(2.22,0);
  \foreach \i/\h/\op in {0/0.26/40,1/0.68/54,2/1.06/70,3/0.44/50,4/0.22/42,5/0.14/36,6/0.10/32}
    \fill[black!\op] ({\i*0.305},0) rectangle ({\i*0.305+0.185},\h);
\end{scope}
\node[font=\sffamily\scriptsize\bfseries,text=black!68] at (1.45,1.04){shared power spectrum};
\node[font=\sffamily\scriptsize,text=black!50] at (1.45,0.70){$P(\textcolor{cx}{x})=P(\textcolor{cxt}{\tilde{x}})$};

\node[plab] at (4.30,3.70){b};
\node[wpanel,draw=cx!45,fill=cx!6,minimum width=3.5cm,minimum height=1.04cm](wx) at (6.15,3.06){};
\node[font=\sffamily\scriptsize\bfseries,text=cx,anchor=west] at (4.55,3.70){$x$\,: phase-coherent};
\draw[base] (4.55,3.06)--(7.75,3.06);
\begin{scope}[shift={(4.62,3.06)}]
  \draw[cx,line width=1.3pt,domain=0:6.283,samples=170,smooth]
    plot ({\x*0.475},{0.30*sin(\x r)+0.10*sin(2*\x r+0.5)});
\end{scope}

\node[plab] at (4.30,1.92){c};
\node[wpanel,draw=cxt!45,fill=cxt!6,minimum width=3.5cm,minimum height=1.04cm](wxt) at (6.15,1.26){};
\node[font=\sffamily\scriptsize\bfseries,text=cxt,anchor=west] at (4.55,1.92){$\tilde{x}$\,: phase-randomized};
\draw[base] (4.55,1.26)--(7.75,1.26);
\begin{scope}[shift={(4.62,1.26)}]
  \draw[cxt,line width=1.3pt,domain=0:6.283,samples=200,smooth]
    plot ({\x*0.475},{0.24*sin(2.3*\x r+1.7)-0.10*sin(5.1*\x r)+0.07*sin(3.3*\x r+0.6)});
\end{scope}

\draw[arr,line width=1.0pt] (7.40,2.54) -- (7.40,1.78);
\node[draw=black!28,rounded corners=2.5pt,fill=white,inner sep=1.8pt,
      font=\sffamily\scriptsize,text=black!68] at (7.40,2.28){phase randomization};

\draw[arr] (2.62,2.13) to[out=24,in=183] (4.32,3.06);
\draw[arr] (2.62,2.13) to[out=-24,in=177] (4.32,1.26);

\node[plab] at (8.62,3.70){d};
\node[font=\sffamily\scriptsize\bfseries,text=black!68,anchor=west] at (8.86,3.70){value of context $\Vm_M$};
\draw[arr] (7.98,3.06)--(8.74,3.06);
\draw[arr] (7.98,1.26)--(8.74,1.26);
\draw[rounded corners=2pt,draw=black!25,line width=0.7pt] (8.80,2.86) rectangle (11.80,3.26);
\draw[black!45,line width=0.6pt,densely dotted] (10.30,2.82)--(10.30,3.30);
\fill[cx!85,rounded corners=1pt] (10.32,2.90) rectangle (11.62,3.22);
\node[font=\sffamily\bfseries\scriptsize,text=white,anchor=east] at (11.56,3.06){$+33\%$};
\node[font=\sffamily\scriptsize,text=cx,anchor=west] at (11.94,3.06){helps};
\draw[rounded corners=2pt,draw=black!25,line width=0.7pt] (8.80,1.06) rectangle (11.80,1.46);
\draw[black!45,line width=0.6pt,densely dotted] (10.30,1.02)--(10.30,1.50);
\fill[cxt!85,rounded corners=1pt] (8.98,1.10) rectangle (10.28,1.42);
\node[font=\sffamily\bfseries\scriptsize,text=white,anchor=west] at (9.04,1.26){$-35\%$};
\node[font=\sffamily\scriptsize,text=cxt,anchor=west] at (11.94,1.26){hurts};
\node[font=\sffamily\scriptsize,text=black!50] at (10.30,3.48){$0$};
\node[font=\sffamily\bfseries\small,text=black!55] at (10.30,2.16){$\neq$};
\node[font=\sffamily\scriptsize,text=black!52,anchor=west] at (10.55,2.16){opposite};

\node[plab] at (13.42,3.70){e};
\node[font=\sffamily\scriptsize\bfseries,text=black!68,anchor=west] at (13.66,3.70){measured on ECL};
\fill[cx] (17.10,3.70) circle (1.6pt);
\node[font=\sffamily\scriptsize,text=black!55,anchor=west] at (17.20,3.70){$x$};
\draw[cxt,line width=1.0pt] (17.62,3.70) circle (1.6pt);
\node[font=\sffamily\scriptsize,text=black!55,anchor=west] at (17.72,3.70){$\tilde{x}$};
\draw[black!45,line width=0.6pt,densely dotted] (16.96,0.62)--(16.96,3.30);
\node[font=\sffamily\scriptsize,text=black!50] at (16.96,3.44){$0$};
\foreach \y in {2.78,1.86,0.94} \draw[black!10,line width=0.5pt] (16.15,\y)--(18.05,\y);
\node[font=\sffamily\scriptsize\bfseries,text=black!62,anchor=west] at (13.55,3.10){window};
\node[font=\sffamily\scriptsize,text=black!48,anchor=east,fill=white,inner sep=1pt] at (18.05,3.10){$56.7\%\!\to\!55.1\%$};
\draw[black!35,line width=0.8pt] (17.981,2.78)--(17.952,2.78);
\fill[cx] (17.981,2.78) circle (1.9pt);
\draw[cxt,line width=1.0pt] (17.952,2.78) circle (1.9pt);
\node[font=\sffamily\scriptsize,text=black!55,anchor=east] at (18.05,2.50){survives};
\node[font=\sffamily\scriptsize\bfseries,text=black!62,anchor=west] at (13.55,2.18){retrieval};
\node[font=\sffamily\scriptsize,text=black!48,anchor=east,fill=white,inner sep=1pt] at (18.05,2.18){$+33.0\%\!\to\!-35.0\%$};
\draw[->,black!35,line width=0.8pt,shorten >=2.4pt,shorten <=2.4pt] (17.554,1.86)--(16.330,1.86);
\fill[cx] (17.554,1.86) circle (1.9pt);
\draw[cxt,line width=1.0pt] (16.330,1.86) circle (1.9pt);
\node[font=\sffamily\scriptsize,text=cxt,anchor=east] at (16.88,1.58){collapses};
\node[font=\sffamily\scriptsize\bfseries,text=black!62,anchor=west] at (13.55,1.26){FM margin};
\node[font=\sffamily\scriptsize,text=black!48,anchor=east,fill=white,inner sep=1pt] at (18.05,1.26){$+9.0\%\!\to\!-8.8\%$};
\draw[->,black!35,line width=0.8pt,shorten >=2.4pt,shorten <=2.4pt] (17.122,0.94)--(16.802,0.94);
\fill[cx] (17.122,0.94) circle (1.9pt);
\draw[cxt,line width=1.0pt] (16.802,0.94) circle (1.9pt);
\end{tikzpicture}}
\caption{\textbf{Identical predictability, opposite value of context.} \textbf{(a--c)}~A series $x$ and its phase-randomized surrogate $\tilde{x}$ share the power spectrum, and after amplitude adjustment the marginal, so every power-spectrum index scores both identically. \textbf{(d)}~The beyond-second-order value of context $\Vm_M$ is nonetheless large on $x$ and negative on $\tilde{x}$. \textbf{(e)}~The measured dissociation on ECL (channel medians; Tables~\ref{tab:gap} and~\ref{tab:fmmargin}): the longer window's purely second-order gain survives phase randomization, while retrieval and the foundation model's beyond-linear margin collapse through zero, the structure the spectrum cannot see.}
\label{fig:teaser}
\end{figure}

Practitioners face a different question. Given a series and a deployment configuration, will \emph{adding context} pay off: a longer lookback, a retrieval plug-in that fetches from the training record \citep{han2025raft}, or a foundation model that carries broad temporal priors \citep{ansari2024chronos,woo2024moirai}? A high predictability score is easily read as license for the heavier option and a low one as a reason to stay simple, and the stakes are real: large-scale re-evaluations report that supervised long-term forecasting rankings flip under small changes of setup or metric \citep{brigato2026champions}. Determining when additional context will improve forecasting remains an active deployment problem. Recent work addresses different aspects of this decision, including selecting the appropriate lookback for each task \citep{abdelmalak2026}, explaining retrospectively when foundation models perform well \citep{whenfm2026}, and redesigning retrieval methods to capture phase-dependent structure that conventional spectral representations overlook \citep{specretf2026}. We show that this reading is unsound: the benefit of context is not a property of the series, and we identify the property that governs it.

Predictability and the benefit of added context diverge when predictability indices cannot distinguish cases in which added context yields different benefits. This limitation follows from what these indices measure. The power spectrum records how much energy sits at each frequency, but it does not preserve the phase structure needed to identify the series' current position within those cycles or determine whether the same position leads to a repeatable future pattern. Phase is precisely what a short window may fail to carry and additional context can supply. A window spanning a full dominant period contains the complete cycle, whereas a shorter one may not, and no within-window model can recover information that is absent \citep{butera2026}. Whether supplying that missing phase is worthwhile depends on whether it recurs across cycles, beyond the spectrum's reach.

We formalize this using phase-randomized surrogate series. Phase randomization \citep{theiler1992,schreiber2000} transforms any series into a surrogate with the same power spectrum but randomized Fourier phases, while an amplitude-adjusted variant also preserves the marginal. Any power-spectrum index is therefore identical on a series and its surrogate. Yet the surrogate is asymptotically Gaussian, so the beyond-second-order value exploited by retrieval and foundation models collapses on it. The spectrum therefore identifies what these indices preserve, but not the structure that determines whether additional context is useful. Figure~\ref{fig:teaser} summarizes the consequence.

We therefore introduce the coverage deficit, a configuration-level diagnostic computed before deployment without test labels. Its principal term combines a measure of beyond-spectrum structure, the gain of analog over linear prediction, with the fraction of the state-identifying motif that the window fails to observe. A second term flags distributional novelty, the regime in which any memory is stale. Across three context-extending mechanisms on seven standard benchmarks, these two terms separate the deployment question along exactly the theoretical line (Table~\ref{tab:lodo}).

We make three contributions.
\begin{itemize}
\item \textbf{Separating predictability from context value.} We distinguish series-level predictability from configuration-level context value and prove an impossibility result. Any index built from the power spectrum, which covers spectral predictability ($\Om$) \citep{omega2026,goerg2013foreca} and spectral-coherence predictability (SCP) \citep{scp2025}, is invariant under phase randomization. Because beyond-spectrum context value is not invariant, no such index can predict it. Amplitude-adjusted surrogates extend the control to indices with a distributional term such as accuracy-law complexity \citep{accuracylaw2025} (Section~\ref{sec:theory}).
\item \textbf{A spectrum-controlled comparison.} Surrogate pairs hold the spectrum \emph{and} the marginal fixed by construction while a longer window, a retrieval plug-in, and a foundation model are switched on and off. The construction fixes exactly what the competing indices read, so the comparison isolates their blind spot rather than relying on correlation (Sections~\ref{sec:theory}, \ref{sec:experiments}).
\item \textbf{A configuration-level diagnostic.} The coverage deficit is label-free and computed before deployment, and its principal term repurposes the nonlinear-prediction statistic of \citet{sugihara1990} to measure the beyond-spectrum structure the power spectrum cannot represent. Leave-one-dataset-out, it predicts the sign of beyond-spectrum context value where $\Om$ is at or below chance; SCP can exceed chance but trails it by 12--19 points (Section~\ref{sec:method}, Table~\ref{tab:lodo}).
\end{itemize}

\paragraph{Problem setup and notation.}
A forecaster maps a lookback window $x_{t-S+1:t}\in\mathbb{R}^{S\times D}$ of a $D$-channel time series to the next $H$ steps. We write $S$ for the lookback length, $H$ for the prediction horizon, and $L$ for the dominant period, estimated per channel as the peak of the training-split periodogram. A context-extending mechanism $M$ enlarges the information available to a base forecaster without changing the prediction target, for example by increasing the lookback ($S\!\rightarrow\!S'>S$), retrieving similar windows from the training record, or supplying pretrained temporal knowledge. Let $f$ denote the base forecaster and $f\oplus M$ the same forecaster augmented with $M$. We define the context value of $M$ at operating point $(S,H)$ on series $x$ as the paired relative reduction in test mean squared error (MSE),
\begin{equation}
V_M(x;S,H)=
\frac{\mathrm{MSE}(f)-\mathrm{MSE}(f\oplus M)}
{\mathrm{MSE}(f)},
\end{equation}
which is positive when $M$ improves prediction. Finally, a series-level predictability index $P(x)$ is any statistic intended to characterize the intrinsic predictability of $x$ independent of a particular operating point. Throughout, we distinguish these two quantities: $P(x)$ characterizes the series itself, whereas $V_M$ depends on both the series and the deployment configuration.

\section{Related Work}
\label{sec:related}

\paragraph{Series-level predictability indices.} A growing line scores how predictable a series is. Spectral predictability traces to forecastable-component analysis \citep{goerg2013foreca}; \citet{omega2026} revive it as $\Om$ and show, across 51 models and 28 datasets, that foundation models beat light baselines when $\Om$ is high. \citet{scp2025} derive a per-instance \emph{linear} MSE lower bound from spectral coherence. \citet{accuracylaw2025} relate a window-wise complexity to the smallest error deep models attain. The information-theoretic and dynamical route runs from model-free quantification with weighted permutation entropy \citep{garland2014} and largest-Lyapunov measures \citep{wang2025forecastability} to horizon-resolved forecastability profiles conditioned on a declared information set \citep{catt2026profile}. That profile bounds the \emph{total} improvement over the unconditional predictor, whereas our gap $\Delta$ isolates the component beyond the best linear predictor on the same access, which no power spectrum represents. We do not dispute these limits, but prove the power-spectrum ones cannot answer the deployment question they are increasingly used for. Entropy and higher-order scores fall outside the impossibility yet stay series- or access-level, and E2 tests them head to head. Our $\Delta_{\mathrm{nl}}$ is the configuration-level analogue.

\paragraph{When does a heavier option help?} The deployment question is now studied directly. \citet{abdelmalak2026} show a mis-specified lookback inverts rankings and tune it by search. \citet{butera2026} attribute long-context benefit to generative-process identification and prove a window must strictly exceed a process memory to reach the minimum error. Our spectral/beyond-spectral split refines that benefit: its second-order part is spectrum-visible and survives phase randomization (our longer-linear-window mechanism), the remainder is not. \citet{whenfm2026} rate foundation models post hoc. Symbolic memories make the operating point concrete: a de Bruijn graph over the discretized training record recovers cross-window structure at windows as short as $S{=}12$, handling at test time exactly the out-of-vocabulary event our novelty term measures \citep{cakiroglu2025dragon}. Dynamical-systems forecasters revive delay-coordinate embedding \citep{deepedm2025,attraos2024}, exploiting the beyond-spectrum structure our result concerns. None provides a label-free, pre-deployment statistic paired with a statement of what no spectral index can do.

\paragraph{Retrieval and pretraining as context.} Retrieval plug-ins inject cross-window structure by frequency statistics \citep{ye2024fan}, learned cycle embeddings \citep{lin2024cyclenet}, corpus lookup \citep{han2025raft,tire2024raf}, diffusion guidance \citep{liu2024radiff}, or per-channel retrieval \citep{craft2026}. Stationarity-aware variants adapt retrieval under non-stationarity \citep{saraf2026}, and long-context comparisons place retrieval against very long windows \citep{ahuja2026}. A recent redesign carries amplitude and phase in the retrieval similarity metric \citep{specretf2026}, independently pointing to phase as the relevant axis. Foundation models are the pretraining route to the same end \citep{ansari2024chronos,woo2024moirai}. Context parroting shows copying from a long context can beat them \citep{zhang2026parroting}, and their failures track spectral shift \citep{spectralshift2025}. We treat all of these as context-extending mechanisms and ask a single question across them.

\paragraph{Surrogate data and selection.} Phase-randomized and amplitude-adjusted surrogates are the classical instrument for separating linear from nonlinear structure, including assessing the significance of a nonlinear prediction gain \citep{theiler1992,schreiber2000}. We repurpose them, with the nonlinear-prediction statistic of \citet{sugihara1990} as $\Delta_{\mathrm{nl}}$, to control the exact quantities the predictability indices read; E2 adds generic catch22 features \citep{lubba2019catch22} as a selection baseline.

\section{Predictability Does Not Determine Context Value}
\label{sec:theory}

We now prove that no power-spectrum index can predict the value of beyond-spectrum context. The argument turns on the gap between what a linear predictor and the best possible predictor achieve, which the spectrum cannot see and a surrogate erases. Throughout, $x$ is a real, second-order-stationary, finite-variance series; the classical steps and all regularity conditions are deferred to the appendix.

\paragraph{Two error floors.}
Fix a horizon and an information set $\mathcal{I}$ available to a forecaster at forecast origin $t$: a length-$S$ window, optionally augmented by retrieved context or information supplied by a pretrained model. Throughout this section, let $h$ denote a predictor based on the information set $\mathcal{I}$. The minimum mean-squared error achievable by any \emph{linear} predictor is
\begin{equation}
\label{eq:sigmalin}
\sigma^2_{\mathrm{lin}}(\mathcal{I})
=
\min_{h\in\mathcal{H}_{\mathrm{lin}}}
\mathbb{E}
\left\|
x_{t+1:t+H}
-
h(\mathcal{I})
\right\|^2,
\end{equation}
where $\mathcal{H}_{\mathrm{lin}}$ denotes the class of all linear predictors based on $\mathcal{I}$. Let $\sigma^2_{*}(\mathcal{I})$ denote the Bayes error, i.e., the minimum mean-squared error over all measurable predictors based on $\mathcal{I}$. The quantity $\sigma^2_{\mathrm{lin}}(\mathcal{I})$ depends on $x$ only through its autocovariance (App.~A.2). Their difference,
\begin{equation}
\Delta(\mathcal{I})=\sigma^2_{\mathrm{lin}}(\mathcal{I})-\sigma^2_{*}(\mathcal{I})\ \ge\ 0,
\end{equation}
is the component of predictability beyond second order. Call a mechanism $M$ \emph{beyond-spectrum} if the predictability it exploits lies past second order, as analog and similarity retrieval and foundation models do and a longer \emph{linear} window does not. The following bound is the theoretical core of the paper: it ties the value of any such mechanism to the gap $\Delta$, the one quantity a power spectrum cannot see.
\begin{theorem}[Value ceiling]
\label{thm:vceiling}
Assume $\sigma^2_{\mathrm{lin}}(\mathcal{I})>0$. Let $h$ be measurable with respect to the information set $\mathcal{I}$ and have finite MSE. Then the relative error reduction of $h$ over the best linear predictor on $\mathcal{I}$ satisfies
\begin{equation}
\label{eq:vgap}
\frac{\sigma^2_{\mathrm{lin}}(\mathcal{I})-\mathrm{MSE}(h)}
{\sigma^2_{\mathrm{lin}}(\mathcal{I})}
\le
\frac{\Delta(\mathcal{I})}
{\sigma^2_{\mathrm{lin}}(\mathcal{I})}
=:
\bar V(\mathcal{I}),
\end{equation}
with equality iff $h$ attains the Bayes error $\sigma^2_{*}(\mathcal{I})$. Consequently, for any context-extending mechanism $M$ with access $\mathcal{I}_M$, the relative error reduction of $f\!\oplus\!M$ over the best linear predictor on $\mathcal{I}_M$ is at most $\bar V(\mathcal{I}_M)$.
\end{theorem}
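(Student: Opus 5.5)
The inequality in \eqref{eq:vgap} follows almost directly from the definition of the Bayes error; the only care needed is in showing that the minimum is attained and in characterizing equality. Let $h$ be measurable with respect to $\mathcal{I}$ with finite MSE. Since $\sigma^2_{*}(\mathcal{I})$ is the infimum of $\mathbb{E}\|x_{t+1:t+H}-g(\mathcal{I})\|^2$ over all measurable $g$, we have $\mathrm{MSE}(h)\ge\sigma^2_{*}(\mathcal{I})$. Subtracting both sides from $\sigma^2_{\mathrm{lin}}(\mathcal{I})$ gives $\sigma^2_{\mathrm{lin}}(\mathcal{I})-\mathrm{MSE}(h)\le\sigma^2_{\mathrm{lin}}(\mathcal{I})-\sigma^2_{*}(\mathcal{I})=\Delta(\mathcal{I})$. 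Dividing by $\sigma^2_{\mathrm{lin}}(\mathcal{I})>0$, which preserves the direction of the inequality, yields the stated ceiling $\bar V(\mathcal{I})$.

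For the equality clause, I would first identify the Bayes predictor as the conditional expectation $h^{*}(\mathcal{I})=\mathbb{E}[x_{t+1:t+H}\mid\sigma(\mathcal{I})]$. It is well defined in $L^2$ because $x$ has finite variance. Orthogonality of $L^2$ projection gives the Pythagorean decomposition
\begin{equation*}
\mathrm{MSE}(h)=\sigma^2_{*}(\mathcal{I})+\mathbb{E}\|h^{*}(\mathcal{I})-h(\mathcal{I})\|^2 .
\end{equation*}
To justify it, I would expand the square and note that the cross term vanishes: $h^{*}-h$ is $\sigma(\mathcal{I})$-measurable and square-integrable, and the residual $x_{t+1:t+H}-h^{*}$ has zero conditional mean. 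Hence equality in \eqref{eq:vgap} holds iff $\mathrm{MSE}(h)=\sigma^2_{*}(\mathcal{I})$, iff $h=h^{*}$ almost surely. This is exactly the statement that $h$ attains the Bayes error. The same identity shows that the infimum defining $\sigma^2_{*}$ is attained, so ``attains the Bayes error'' is meaningful. I would also record that $\sigma^2_{*}\le\sigma^2_{\mathrm{lin}}$, because linear predictors are measurable; this gives $\Delta\ge0$, so $\bar V\in[0,1]$.

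For the consequence, I would apply the first part with $\mathcal{I}=\mathcal{I}_M$ and $h=f\oplus M$. This is the step I expect to need the most care, though it is conceptual rather than technical. One must check that the augmented forecaster is genuinely a measurable function of $\mathcal{I}_M$ and has finite MSE. The difficulty is that $f\oplus M$ also depends on the training record and on fitted parameters. I would handle this by treating training as fixed: condition on the training split and assume it is independent of, or included in, $\mathcal{I}_M$ (retrieved windows and pretrained weights are part of the access by definition). Under that assumption $f\oplus M$ is a fixed measurable map of $\mathcal{I}_M$, and the bound applies verbatim. The relevant regularity conditions are those deferred to the appendix.
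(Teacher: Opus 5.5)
Your proof is correct and follows the paper's argument. You use $\mathrm{MSE}(h)\ge\sigma^2_{*}(\mathcal{I})$ by definition of the Bayes error, subtract from $\sigma^2_{\mathrm{lin}}(\mathcal{I})$, divide by $\sigma^2_{\mathrm{lin}}(\mathcal{I})>0$, and specialize to $\mathcal{I}=\mathcal{I}_M$, $h=f\oplus M$; your Pythagorean identity also sharpens the equality case to $h=\mathbb{E}[x_{t+1:t+H}\mid\sigma(\mathcal{I})]$ a.s.\ and shows the infimum is attained, which the paper takes for granted. The one addition in the paper is coverage of randomized forecasters $h(\mathcal{I},U)$, with $U$ independent of $(\mathcal{I},\text{target})$, via Jensen: $\mathrm{MSE}(h)\ge\mathrm{MSE}(\mathbb{E}[h\mid\mathcal{I}])\ge\sigma^2_{*}(\mathcal{I})$. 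Your ``fixed measurable map'' treatment of $f\oplus M$ would need that step if the mechanism samples.
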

\noindent\emph{Proof sketch.} Every $h$ measurable in $\mathcal{I}$ has $\mathrm{MSE}(h)\ge\sigma^2_{*}(\mathcal{I})$; subtracting from $\sigma^2_{\mathrm{lin}}(\mathcal{I})$ and normalizing gives \eqref{eq:vgap}, with equality exactly at the Bayes error. Randomized predictors are covered by Jensen. Full proof: App.~A.4. Because the bound is over the linear predictor with the same access, $\bar V(\mathcal{I}_M)$ measures the \emph{beyond-second-order} component of context value. For retrieval with memory conditioned upon and the window as key, $\sigma^2_{\mathrm{lin}}(\mathcal{I}_M)=\sigma^2_{\mathrm{lin}}(\mathcal{I}_{\mathrm{base}})$, where $\mathcal{I}_{\mathrm{base}}$ is the base $S$-window. Thus, the mechanism's entire value is beyond second order (App.~A.8). A mechanism that also enlarges the linear information set (a longer window, or a foundation model reading a long context) keeps a spectrum-visible second-order component. The theorem then governs its \emph{margin} over the best linear predictor on that access, which E1 records.

To isolate the beyond-second-order gap, we next introduce surrogate time series. The phase-randomized surrogate $\tilde{x}$ preserves the Fourier amplitudes while randomizing the phases. The iterative amplitude-adjusted Fourier transform (IAAFT) surrogate additionally preserves the marginal distribution (App.~A.1).

\begin{proposition}[Spectral invariance]
\label{prop:invariance}
The periodogram, the full autocovariance, and therefore $\sigma^2_{\mathrm{lin}}(\mathcal{I})$ for every $\mathcal{I}$ drawn from the series are identical for $x$ and its phase-randomized surrogate $\tilde{x}$. Hence any index $P$ that is a functional of the power spectrum or the autocovariance satisfies $P(\tilde{x})=P(x)$; this covers spectral predictability $\Om$ exactly, and spectral-coherence predictability insofar as it reads the preserved per-channel spectra (empirically frozen to $|\Delta\mathrm{SCP}|\le0.015$ in E1). The amplitude-adjusted variant additionally fixes the marginal, up to the reported residual.
\end{proposition}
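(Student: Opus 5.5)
The plan is a chain of identities: the surrogate keeps the DFT moduli, the moduli determine the periodogram, the periodogram determines the circular autocovariance, and the autocovariance determines $\sigma^2_{\mathrm{lin}}$ for every information set. Write the sample $x_{0:N-1}$ with DFT $X_k=\sum_t x_t e^{-2\pi i kt/N}$. The surrogate is $\tilde X_k=|X_k|e^{i\phi_k}$, where the $\phi_k$ are independent and uniform, subject to $\phi_{N-k}=-\phi_k$ for realness and with $\phi_0$ (and $\phi_{N/2}$) left fixed.

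First, $|\tilde X_k|=|X_k|$ by construction, so the periodogram $I_k=|X_k|^2/N$ coincides for $x$ and $\tilde x$.

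Second, apply the Wiener--Khinchin identity on $\mathbb{Z}_N$. The circular sample autocovariance is $\hat\gamma(\tau)=\frac1N\sum_t x_t x_{t+\tau \bmod N}=\frac1{N}\sum_k I_k e^{2\pi i k\tau/N}$. It depends on the data only through $\{I_k\}$, so it is identical for every lag $\tau$. This is the precise sense of ``full autocovariance'' I would state. For the population version, I would use the stationary model of $\tilde x$: a random-phase Fourier sum whose covariance $\mathbb{E}[\tilde x_t\tilde x_{t+\tau}]$ equals $\hat\gamma(\tau)$ of $x$. As $N\to\infty$ this converges to $\gamma_x(\tau)$ under the regularity conditions of the appendix.

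Third, show that $\sigma^2_{\mathrm{lin}}(\mathcal{I})$ is a functional of the autocovariance, citing App.~A.2. Take any $\mathcal{I}$ drawn from the series, meaning a finite or countable collection of lagged values such as the $S$-window, a longer window, or fixed retrieved coordinates treated as extra lags. The best linear predictor solves the normal equations $\Gamma_{\mathcal{I}}\beta=\gamma_{\mathcal{I},\mathrm{target}}$. Its error $\mathrm{tr}\,\Gamma_{\mathrm{target}}-\gamma^\top_{\mathcal{I},\mathrm{target}}\Gamma_{\mathcal{I}}^{+}\gamma_{\mathcal{I},\mathrm{target}}$ involves only autocovariance entries, which the second step showed are equal. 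Hence $\sigma^2_{\mathrm{lin}}$ agrees for $x$ and $\tilde x$.

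Fourth, conclude for indices. If $P=\Phi(\{I_k\})$ or $P=\Psi(\gamma)$, then $P(\tilde x)=P(x)$ by composition. $\Om$ is defined as one minus the normalized spectral entropy of the (smoothed) periodogram, and smoothing is a deterministic map of $\{I_k\}$, so it is exactly invariant.

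For SCP I would avoid claiming exact invariance. Its per-channel spectral factors are preserved, but cross-spectral coherence between channels depends on relative phases. It is preserved only if a common phase vector $\phi_k$ is used across channels, as in multivariate Theiler surrogates. I would state this condition and otherwise defer to the empirical bound, which is what the proposition's ``insofar as'' wording signals.

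For IAAFT, the rank-ordering step makes the marginal match exactly. The spectrum then matches only up to the residual of the alternating projection at convergence, so the marginal claim and the spectral claim hold together only ``up to the reported residual.''

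The main obstacle is the third step's dependence on the distinction between sample and population. The surrogate matches the \emph{circular} sample autocovariance, not the linear (non-wrapped) one, and $\sigma^2_{\mathrm{lin}}$ is a population quantity. I would first handle this exactly in the finite-sample circular model, where the random-phase process on $\mathbb{Z}_N$ has covariance exactly $\hat\gamma$. I would then pass to the limit, using the appendix's summability assumption on $\gamma$ to make the wraparound error $O(\tau/N)$ for the fixed finite lags in $\mathcal{I}$. Continuity of the normal-equation solution in $\Gamma$ requires $\Gamma_{\mathcal{I}}$ to be nonsingular, which holds when the spectral density is bounded below; otherwise I would fall back to pseudo-inverse continuity on the range.
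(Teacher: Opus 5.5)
Your proposal is correct and is essentially the paper's proof (App.~A.3 together with the normal-equation representation of App.~A.2): the preserved moduli fix the periodogram, its inverse DFT fixes the circular autocovariance at every lag, $\sigma^2_{\mathrm{lin}}$ depends only on that autocovariance, and the indices inherit invariance by composition, with the same caveats for cross-channel SCP and the IAAFT residual. Your extra passage from the circular model to the population floor goes beyond the paper, which only notes $O(1/T)$ edge terms, but drop the pseudo-inverse fallback: where the rank of $\Sigma_{\mathcal{I}}$ drops, the optimal linear MSE $\gamma(0)-c^{\top}\Sigma_{\mathcal{I}}^{+}c$ is only upper semicontinuous, so only your nonsingular case actually goes through.
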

\looseness=-1 The amplitudes $|X_k|$ are untouched, so the periodogram and its inverse transform, the autocovariance, are preserved at every lag, and $\sigma^2_{\mathrm{lin}}$, which solves the linear normal equations in the autocovariance, follows; App.~A.3 gives the computation.

\begin{lemma}[The phase-randomized surrogate erases the gap]
\label{lem:gap}
Assume the normalized spectral mass is not concentrated on finitely many frequencies (the Lindeberg condition $\max_k a_k^2/s_T^2\to0$). Then the finite-dimensional laws of $\tilde{x}$ converge to those of the stationary Gaussian process $x_G$ with the autocovariance of $x$, second moments are preserved exactly along the sequence, and for every fixed degree $D$ the best degree-$\le D$ polynomial predictor asymptotically gains nothing over the linear one ($\Delta_D(\mathcal{I})\to0$) for every finite $\mathcal{I}$. Under Condition~(M) of App.~A.5 (convergence of conditional means in $L^2$), the full gap closes as well: $\Delta(\mathcal{I})\to0$.
\end{lemma}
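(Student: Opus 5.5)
The plan is to write the phase-randomized surrogate explicitly as a random trigonometric sum and to treat each finite-dimensional projection with a triangular-array central limit theorem. For a record of length $T$, $\tilde x_t=\bar x+\sum_{k} a_k\cos(\omega_k t+\phi_k)$, where $a_k=2|X_k|/T$ and the phases $\phi_k$ are i.i.d.\ uniform on $[0,2\pi)$; conjugate symmetry is imposed, and the DC term and the Nyquist term, if present, are handled separately. The autocovariance comes first. Each summand has mean zero, and $\mathbb{E}[\cos(\omega_k t+\phi_k)\cos(\omega_k s+\phi_k)]=\tfrac12\cos(\omega_k(t-s))$, with cross terms for $k\neq k'$ vanishing by independence. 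This gives $\mathrm{Cov}(\tilde x_t,\tilde x_s)=\sum_k \tfrac{a_k^2}{2}\cos(\omega_k(t-s))$, which is exactly the (circular) sample autocovariance of $x$. So second moments are preserved along the sequence, consistent with Proposition~\ref{prop:invariance}, and as $T\to\infty$ they converge to the autocovariance $\gamma$ of $x$, where the regularity conditions of the appendix supply this limit.

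For convergence of finite-dimensional laws, fix times $t_1,\dots,t_m$ and coefficients $c\in\mathbb{R}^m$, and use the Cramér--Wold device. The projection $\sum_j c_j\tilde x_{t_j}$ is a sum of independent mean-zero terms $Y_k=a_k\sum_j c_j\cos(\omega_k t_j+\phi_k)$ with $|Y_k|\le a_k\|c\|_1$, and its variance $s_T^2(c)$ converges to $c^\top\Gamma c$, where $\Gamma=(\gamma(t_i-t_j))_{ij}$. Because the summands are bounded by $a_k$, the hypothesis $\max_k a_k^2/s_T^2\to0$ gives the Lindeberg condition directly: for large $T$ the indicator $\{|Y_k|>\varepsilon s_T\}$ is identically zero. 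When $c^\top\Gamma c=0$ the limit is degenerate and trivially Gaussian. Lindeberg--Feller then yields $\tilde x_{t_1:t_m}\Rightarrow x_G{}_{t_1:t_m}$.

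For the polynomial gap, note that the best degree-$\le D$ predictor on a finite window $\mathcal I$ solves normal equations in the moments of $(\tilde x_{\mathcal I},\tilde x_{t+1:t+H})$ up to order $2D$. Uniform integrability of these moments follows from a Rosenthal/Marcinkiewicz--Zygmund bound on sums of independent bounded terms, whose $p$-th moments are controlled by $s_T^p$ plus $\sum_k a_k^p$, the latter vanishing relative to $s_T^p$. So weak convergence upgrades to convergence of all moments up to order $2D$, and these converge to the Gaussian moments. Under the Gaussian law the conditional mean is linear, so $\Delta_D=0$ there. Continuity of the projection error in the moment matrix gives $\Delta_D(\mathcal I)\to0$; if the Gram matrix is singular, a pseudo-inverse or a limiting argument is needed. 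This continuity step is the main technical obstacle I expect, since the limit Gram matrix may be degenerate. I would handle it by noting that the projection error is an infimum of continuous functions, which makes it upper semicontinuous, while the lower bound $\sigma^2_{\mathrm{lin}}-\sigma^2_D\ge0$ is automatic.

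Finally, for the full gap, write $\Delta=\sigma^2_{\mathrm{lin}}-\sigma^2_*$, where $\sigma^2_{\mathrm{lin}}$ is exactly preserved. Condition~(M) asserts that the surrogate conditional means $\mathbb{E}[\tilde x_{t+1:t+H}\mid\tilde x_{\mathcal I}]$ converge in $L^2$ to the Gaussian conditional mean, which is linear. Since $\sigma^2_*=\mathbb{E}\|\tilde x_{t+1:t+H}\|^2-\mathbb{E}\|\mathbb{E}[\cdot\mid\cdot]\|^2$ and the second moments converge, $\sigma^2_*$ converges to the Gaussian Bayes error. That Bayes error equals the Gaussian linear error, which in turn equals $\lim\sigma^2_{\mathrm{lin}}$. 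Hence $\Delta(\mathcal I)\to0$.
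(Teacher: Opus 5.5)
Your overall route is the paper's: an explicit random trigonometric sum, exact second moments, a Lindeberg--Feller CLT for the finite-dimensional laws, moment convergence feeding the degree-$D$ least-squares risk, and Condition~(M) for the full gap. Your Cram\'er--Wold argument with bounded summands, and your Rosenthal/uniform-integrability route to moment convergence, are valid substitutes for the paper's cumulant bound.

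The gap is in the step you flag as the main obstacle. Write $\sigma^2_D(T)=\inf_\beta\mathbb{E}_T\|Y-\beta^\top\phi_D(X)\|^2$. Upper semicontinuity of this infimum in the moments gives $\limsup_T\sigma^2_D(T)\le\sigma^2_D(G)$. Hence $\liminf_T\Delta_D(\mathcal{I})\ge\lim_T\sigma^2_{\mathrm{lin}}-\sigma^2_D(G)=0$, which points the same way as the automatic bound $\Delta_D\ge0$. To conclude $\Delta_D\to0$ you need the opposite inequality, $\liminf_T\sigma^2_D(T)\ge\sigma^2_D(G)$. That is lower semicontinuity: finite-$T$ polynomial predictors cannot beat the Gaussian limit. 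An infimum of continuous functions never gives this. It is also exactly what is at risk when the Gram matrix degenerates. Along a near-singular sequence the optimal coefficients can blow up and exploit non-Gaussian structure in directions of vanishing variance, which a CLT for fixed projections $c^\top\tilde x$ does not control. A pseudo-inverse does not help either, since it is discontinuous at rank changes.

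The missing idea is the one the paper uses: take the limiting covariance of the window coordinates to be nondegenerate (the paper's ``for a nondegenerate covariance''). This is a hypothesis on $\gamma$, not a consequence of the Lindeberg condition on the finite-$T$ amplitudes. Under it, the window of $x_G$ has a positive density, so no nonzero polynomial vanishes almost surely. The Gram matrix $G_G=\mathbb{E}_G[\phi_D\phi_D^\top]$ of monomials of degree $\le D$ is therefore positive definite. Your moment convergence gives $G_T\to G_G$ and $C_T=\mathbb{E}_T[\phi_D Y^\top]\to C_G$. So $G_T$ is invertible for large $T$ and $G_T^{-1}\to G_G^{-1}$. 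Then $\sigma^2_D(T)=\mathbb{E}_T\|Y\|^2-\mathrm{tr}(C_T^\top G_T^{-1}C_T)$ converges to its Gaussian value, which the affine predictor already attains. Your worry that the limit Gram matrix may be singular is thus settled by hypothesis, not by semicontinuity. With that hypothesis stated, the rest of your proof, including the Condition~(M) step, goes through.
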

\looseness=-1 The surrogate is a sum of independent-phase sinusoids with the covariance of $x$ at every length; the Lindeberg condition kills every standardized joint cumulant of order three and up, so all joint moments converge to Gaussian ones and each fixed-degree least-squares problem converges to its Gaussian counterpart, where the linear predictor is already optimal (App.~A.5). Closing the gap over \emph{all} measurable predictors needs more than moments; Condition~(M) (App.~A.5) supplies it, and no downstream claim uses it: the impossibility is anchored at the exact endpoint $x_G$.

\begin{proposition}[Beyond-spectrum context value is not spectral]
\label{prop:noninvariance}
For the stationary Gaussian process $x_G$ with the autocovariance of $x$, $\bar V(\mathcal{I})=0$ exactly for every $\mathcal{I}$, so by Theorem~\ref{thm:vceiling} the beyond-second-order value of every mechanism is zero on $x_G$: retrieval keyed on the operating window has no value, while a mechanism that also enlarges the linear information set keeps its spectrum-visible second-order gain and loses exactly its margin. For any $x$ with structure beyond second order, $\bar V(\mathcal{I}_M)=\Delta(\mathcal{I}_M)/\sigma^2_{\mathrm{lin}}(\mathcal{I}_M)>0$, and such $x$ exist. Beyond-spectrum context value therefore separates the pair $(x,x_G)$.
\end{proposition}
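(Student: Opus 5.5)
The plan has three parts. First, show that the Gaussian endpoint has zero gap. Second, translate that into zero beyond-second-order value through Theorem~\ref{thm:vceiling}. Third, exhibit a non-Gaussian $x$ with a strictly positive gap.

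\emph{Step 1: the gap vanishes on $x_G$.} Fix any finite information set $\mathcal{I}$ drawn from $x_G$, for example a window of length $S$ together with any finite retrieved or pretrained coordinates. The target $x_{t+1:t+H}$ and the conditioning vector $Z$ built from $\mathcal{I}$ are jointly Gaussian and centred, after subtracting the mean. For jointly Gaussian vectors, $\mathbb{E}[Y\mid Z]=\Sigma_{YZ}\Sigma_{ZZ}^{+}Z$, which is linear in $Z$. Since the conditional mean attains the Bayes error, $\sigma^2_{*}(\mathcal{I})=\sigma^2_{\mathrm{lin}}(\mathcal{I})$, and hence $\Delta(\mathcal{I})=0$. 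The degenerate case $\sigma^2_{\mathrm{lin}}=0$ needs separate handling. There, $\Delta=0$ trivially, and we adopt the convention $\bar V=0$ (equivalently, we restrict to $\sigma^2_{\mathrm{lin}}>0$ as in the theorem). For infinite information sets, such as an entire training record, I would pass to the limit. The conditional expectation onto the closed Gaussian Hilbert space coincides with $L^2$ projection onto the closed linear span, so the equality persists. This is the only mildly technical point, and I would cite the standard fact that in a Gaussian space the $L^2$-closure of the linear span is the full Gaussian space, so conditional expectations stay in it.

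\emph{Step 2: consequences for mechanisms.} Apply Theorem~\ref{thm:vceiling} with $\mathcal{I}=\mathcal{I}_M$. Any $f\oplus M$ has relative reduction over the best linear predictor on $\mathcal{I}_M$ at most $\bar V(\mathcal{I}_M)=0$.
\begin{itemize}
\item For window-keyed retrieval, the paper notes (App.~A.8) that $\sigma^2_{\mathrm{lin}}(\mathcal{I}_M)=\sigma^2_{\mathrm{lin}}(\mathcal{I}_{\mathrm{base}})$. Hence $\mathrm{MSE}(f\oplus M)\ge\sigma^2_{\mathrm{lin}}(\mathcal{I}_{\mathrm{base}})$, and retrieval cannot beat the best linear base predictor, so its value is nonpositive.
\item For a mechanism that enlarges the linear access, decompose the error reduction relative to the base as
\[\bigl[\sigma^2_{\mathrm{lin}}(\mathcal{I}_{\mathrm{base}})-\sigma^2_{\mathrm{lin}}(\mathcal{I}_M)\bigr]+\bigl[\sigma^2_{\mathrm{lin}}(\mathcal{I}_M)-\mathrm{MSE}(f\oplus M)\bigr].\]
The first bracket depends only on the autocovariance, by Proposition~\ref{prop:invariance}, so it is the same on $x$ and $x_G$. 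The second bracket, the margin, is $\le 0$ on $x_G$.
\end{itemize}

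\emph{Step 3: existence of $x$ with $\bar V>0$.} I would exhibit a concrete example. Take i.i.d.\ $\varepsilon_t\sim N(0,1)$ and let $x_t=\varepsilon_t\varepsilon_{t-1}$, or alternatively a quadratic moving average such as $x_t=\varepsilon_t+\beta(\varepsilon_{t-1}^2-1)$. This process is stationary with finite variance. It is also serially uncorrelated, so the best linear one-step predictor from any window is the mean, and $\sigma^2_{\mathrm{lin}}=\mathrm{Var}(x_t)$. However, $\mathbb{E}[x_{t+1}\mid x_{t-S+1:t}]$ is nonconstant: in the second example, $x_{t}$ carries information about $\varepsilon_{t}$ and hence about $\varepsilon_t^2$, which enters $x_{t+1}$. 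Consequently $\sigma^2_{*}<\sigma^2_{\mathrm{lin}}$. To verify this, it suffices to show that some nonlinear function of $\mathcal{I}$ has nonzero covariance with the target. For the second example, $\mathrm{Cov}(x_{t+1},x_t^2)=2\beta\ne0$. This implies that a quadratic predictor strictly improves on the constant, so $\Delta>0$ and therefore $\bar V>0$.

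Combining the three steps, $\bar V(\mathcal{I}_M)>0$ on $x$ but $\bar V(\mathcal{I}_M)=0$ on $x_G$, even though the two processes share every power-spectrum index. This gives the separation. I expect the main obstacle to be the infinite-information-set case in Step~1; I would resolve it by the Gaussian-Hilbert-space closure argument.
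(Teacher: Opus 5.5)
Your Steps 1 and 2 follow the paper's proof (App.~A.6). Joint Gaussianity makes $\mathbb{E}[Y\mid\mathcal{I}]$ affine, so $\bar V=0$ on $x_G$. Theorem~\ref{thm:vceiling} then gives window-keyed retrieval, whose linear floor equals the base floor, no room to help. A wider-access mechanism keeps only its autocovariance-determined second-order gain.

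The infinite-information-set case you expect to be the main obstacle is not needed. The paper conditions on the memory, so $\mathcal{I}_M=\sigma(\kappa)$ is finite-dimensional, as is a 512-point context. If you keep it, the fact to cite is this: in a Gaussian Hilbert space, $Y-\Pi_K Y\perp K$ forces independence from $\sigma(K)$, hence $\mathbb{E}[Y\mid\sigma(K)]=\Pi_K Y$.

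The real flaw is in Step~3: your first example $x_t=\varepsilon_t\varepsilon_{t-1}$ has \emph{no} gap. Since $\varepsilon_{t+1}$ is independent of $x_{\le t}$, you get $\mathbb{E}[x_{t+1}\mid x_{\le t}]=\mathbb{E}[\varepsilon_t\,\mathbb{E}[\varepsilon_{t+1}]\mid x_{\le t}]=0$. It is a martingale difference sequence whose nonlinearity sits in the conditional variance, which squared error cannot exploit, so $\sigma^2_*=\sigma^2_{\mathrm{lin}}$ at every horizon. Delete it.

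Your second example is correct. Gaussian innovations give $\mathbb{E}\varepsilon^3=0$, hence whiteness, and $\mathrm{Cov}(x_{t+1},x_t^2)=\beta\,\mathrm{Var}(\varepsilon_t^2)=2\beta$. Write out the final step rather than asserting that a quadratic predictor improves. With $g=x_t^2-\mathbb{E}x_t^2$, you have $\mathbb{E}(x_{t+1}-cg)^2=\mathrm{Var}(x_t)-4\beta c+c^2\mathrm{Var}(x_t^2)$, minimized at $c=2\beta/\mathrm{Var}(x_t^2)$. This gives $\Delta\ge 4\beta^2/\mathrm{Var}(x_t^2)>0$ for every window containing $x_t$.

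With that repair, your existence route differs from the paper's. The paper uses $x_{t+1}=f(x_t)+\varepsilon_t$ with $f$ bounded and non-affine on the stationary support. There the Bayes predictor is explicitly $f(x_t)$, so on the one-step window $\Delta=\min_{a,b}\mathbb{E}(f(x_t)-a-bx_t)^2$ exactly. That buys the exact size of the gap and matches E3's generator. The price is assuming a stationary solution exists, and for longer windows one must also rule out $f(x_t)$ being a.s.\ affine in the whole window.

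Your quadratic moving average never identifies the Bayes predictor and yields only a lower bound on $\Delta$. In return, stationarity is free, since the process is a 1-dependent function of i.i.d.\ noise. The linear floor is $\mathrm{Var}(x_t)$ for all $S$ at once. It is also a sharper instance of the separation: its Gaussian twin $x_G$ is i.i.d.\ with a flat spectrum, so every spectral index reads ``unpredictable'' while $\Delta>0$.
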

\noindent\emph{Proof sketch.} Under $x_G$ the coordinates of any window and target are jointly Gaussian, so conditional expectations are affine and $\sigma^2_{*}=\sigma^2_{\mathrm{lin}}$ on every $\mathcal{I}$; window-measurable augmentations do not enlarge the conditioning $\sigma$-algebra (App.~A.8). Existence: for $x_{t+1}=f(x_t)+\varepsilon_t$ with i.i.d.\ noise and $f$ non-affine on the support of the stationary law, $\sigma^2_{*}$ is the noise variance while $\sigma^2_{\mathrm{lin}}$ strictly exceeds it; E3's generator instantiates this. Full proof: App.~A.6.

\begin{corollary}[Impossibility]
\label{cor:impossible}
No predictability index that is a functional of the power spectrum or the autocovariance can determine beyond-spectrum context value: any such $P$ is constant across the pair $(x,x_G)$ (Proposition~\ref{prop:invariance}) while $\bar V$ differs across it whenever $\Delta>0$ (Proposition~\ref{prop:noninvariance}). This covers $\Om$ exactly and SCP up to the reported per-channel residual; the surrogate realizes the comparison at finite length (Lemma~\ref{lem:gap}; proof: App.~A.7).
\end{corollary}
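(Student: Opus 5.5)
The plan is to read ``determine'' in the functional sense and derive the corollary from two earlier results. Suppose some index $P$, a functional of the power spectrum or autocovariance, determined beyond-spectrum context value. Then there would be a map $g$ with $\bar V(\mathcal{I}_M)[y]=g(P(y))$ for every admissible series $y$ and every fixed access pattern $\mathcal{I}_M$. To refute this, I will exhibit a single pair of series on which $P$ agrees but $\bar V$ differs. The pair is $(x,x_G)$, where $x$ has $\Delta(\mathcal{I}_M)>0$ and $x_G$ is the stationary Gaussian process with the same autocovariance.

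\textbf{Step 1 (same index).} By construction $x_G$ has the autocovariance of $x$, and hence the same spectral density. Any functional of the spectrum or the autocovariance therefore gives $P(x_G)=P(x)$. For $\Omega$ this is exact, since $\Omega$ is a functional of the normalized spectral density alone. For SCP, the invariance is only as good as the per-channel spectra it reads. I will state that case conditionally and cite the empirical residual bound from Proposition~\ref{prop:invariance}.

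\textbf{Step 2 (different value).} Proposition~\ref{prop:noninvariance} gives $\bar V(\mathcal{I})=0$ on $x_G$ for every $\mathcal{I}$. The same proposition supplies an $x$ with $\Delta(\mathcal{I}_M)>0$, namely the nonlinear AR(1) example, for which $\bar V(\mathcal{I}_M)>0$. Here I need to check that $\sigma^2_{\mathrm{lin}}(\mathcal{I}_M)$ coincides on the two series, so the normalization in $\bar V$ is the same. This follows from Proposition~\ref{prop:invariance}, since $\sigma^2_{\mathrm{lin}}$ depends on the series only through its autocovariance.

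\textbf{Step 3 (conclusion).} Applying $g$ to Step 1 gives $g(P(x))=g(P(x_G))$, while Step 2 gives $0\neq\bar V(\mathcal{I}_M)[x]$. This is a contradiction. Theorem~\ref{thm:vceiling} then transfers the statement from $\bar V$ to the realized beyond-second-order value of any mechanism: on $x_G$ that value is bounded by $0$, while on $x$ the ceiling is positive and is attained by the Bayes predictor on $\mathcal{I}_M$. So the realized value itself, not just its ceiling, separates the pair.

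The main obstacle is the finite-length realization, i.e.\ passing from $x_G$ to the actual surrogate $\tilde x$. Proposition~\ref{prop:invariance} gives $P(\tilde x)=P(x)$ exactly at every length. Lemma~\ref{lem:gap} gives $\Delta_D\to0$ under the Lindeberg condition, and $\Delta\to0$ only under Condition~(M). I will keep the impossibility argument anchored at the exact endpoint $x_G$, so it needs neither convergence assumption. Lemma~\ref{lem:gap} is then used only to show that $\tilde x$ approximates this comparison. Concretely, the degree-$D$ value ceiling on $\tilde x$ tends to zero, while on $x$ it stays bounded away from zero for $D$ large enough, because $\Delta_D\uparrow\Delta$ by $L^2$ density of polynomials under moment determinacy. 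This yields a separation at finite length for polynomial-class mechanisms, with the full statement under Condition~(M).
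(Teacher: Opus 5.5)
Your argument is correct and is essentially the paper's own proof in App.~A.7: fix the pair $(x,x_G)$, observe that any spectral or autocovariance functional $P$ agrees on it, and invoke Proposition~\ref{prop:noninvariance} for $\bar V(x_G)=0<\bar V(x)$ on the nonlinear-AR example; your check that $\sigma^2_{\mathrm{lin}}$ coincides is a useful addition, since it guarantees that $\bar V$ is well defined on $x_G$. Your finite-length polynomial add-on goes beyond what the paper proves, and if you keep it, note that for a multivariate information set moment determinacy alone does not give $L^2$ density of polynomials (it does in one dimension, which covers the one-step example), so assume something stronger such as exponential moments, which hold for bounded $f$ with Gaussian noise.
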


Corollary~\ref{cor:impossible} is the central result. It does not say the indices of Section~\ref{sec:related} are wrong about predictability; it says the deployment question requires a statistic sensitive to the gap $\Delta$ and the window, not the spectrum alone. This refines rather than contradicts prior work: an index reported to predict when foundation models beat baselines \citep{omega2026} is, by Proposition~\ref{prop:invariance}, blind to the gap those models exploit; Section~\ref{sec:method} estimates it directly.

\paragraph{Indices that also read the marginal.}\looseness=-1{} The phase-randomized surrogate alters the marginal, so an index with a distributional term, such as accuracy-law complexity, is not constant across that pair and not covered exactly by Corollary~\ref{cor:impossible}. The IAAFT surrogate fixes the marginal too, holding everything such an index reads; being a static monotonic transform of a Gaussian process rather than Gaussian, its gap is small but nonzero. We compare $\Delta_{\mathrm{nl}}(x)$ against the IAAFT ensemble as a standard surrogate test (E1) and cross-check against phase-randomized (FT) surrogates (E6). A marginal term yields no reliable handle on the gap either.

\begin{remark}
The result applies where context value arises from the gap $\Delta$, the recurring nonlinear motifs and deterministic dynamics that similarity retrieval and in-context completion exploit. It is vacuous where value is purely second-order or $\Delta=0$ leaves nothing to separate. The diagnostic below carries one term for $\Delta$ and one for novelty.
\end{remark}

\section{The Coverage-Deficit Diagnostic}
\label{sec:method}

A useful diagnostic must be computable before deployment, without test labels, and must read the configuration, not only the series. We define the \emph{coverage deficit} $\Gamma(S,H)$ from a coverage term $\Gcov$ and a novelty term $\Goov$, each matched to a way context value goes to zero (Figure~\ref{fig:framework}).

\paragraph{Beyond-spectrum structure term.} The key quantity is the gap $\Delta$ of Section~\ref{sec:theory}: the structure a similarity-retrieval or foundation-model context can exploit and the spectrum cannot represent. We estimate the normalized gap $\Delta/\sigma^2_{\mathrm{lin}}=\bar V$ of Eq.~\eqref{eq:vgap}, label-free on the training split, as the \emph{analog prediction gain}
\begin{equation}
\Delta_{\mathrm{nl}}=1-\frac{\mathrm{MSE}_{\text{analog}}}{\mathrm{MSE}_{\text{linear}}},
\end{equation}
where $\mathrm{MSE}_{\text{linear}}$ estimates $\sigma^2_{\mathrm{lin}}$ with a least-squares predictor and $\mathrm{MSE}_{\text{analog}}$ upper-bounds $\sigma^2_{*}$ with a fixed-$k$ nearest-neighbour predictor, so $\Delta_{\mathrm{nl}}$ is a \emph{conservative} (lower-bound) estimate of $\bar V$, cross-validated within the training split with no test labels (estimator details: App.~A.9). By Proposition~\ref{prop:invariance} and Lemma~\ref{lem:gap}, $\Delta_{\mathrm{nl}}$ is the term the power spectrum cannot see. A series and its phase-randomized surrogate share $\Om$, yet $\Delta_{\mathrm{nl}}$ is large for a series with deterministic motifs and $\to 0$ for the surrogate. The surrogate is the Gaussian process with that spectrum, on which analog matching offers no improvement. The analog predictor is the simplex/nearest-neighbour method of empirical dynamic modeling \citep{sugihara1990,takens1981}, so $\Delta_{\mathrm{nl}}$ is the classical nonlinear-versus-linear prediction gain. This term is what lets $\Gamma$ separate cases the indices treat alike.

\paragraph{Coverage term.} Beyond-spectrum structure is worth supplying only when the window is too short to capture it directly. Let $m$ be the motif length that identifies the local state, with the dominant period $L$ from the training periodogram as the default proxy, and let
\begin{equation}
u(S)=\max\!\Big(0,\ \tfrac{m-S}{m}\Big)
\end{equation}
be the fraction of that motif a length-$S$ window does not observe; a window shorter than the motif cannot form the delay embedding the analog predictor needs \citep{takens1981}, and an input strictly longer than the process memory is necessary even in principle \citep{butera2026}. Then
\begin{equation}
\Gcov(S)=\Delta_{\mathrm{nl}}\cdot u(S).
\end{equation}
$\Gcov$ is large only when there is beyond-spectrum structure to exploit \emph{and} the window is too short to reach it on its own. The operating point enters through $u(S)$; the part the indices miss enters through $\Delta_{\mathrm{nl}}$.

\paragraph{Novelty term.} Even with exploitable structure, a memory is useless if deployment inputs are unlike the training record. Following the symbolic route, discretize each channel into $b$ quantile bins, index training tuples, and let
\begin{equation}
\Goov(S)=\Pr\big[\text{window tuple}\notin \text{index}\big]
\end{equation}
be the out-of-vocabulary rate over the symbolic index, with quantile-bin discretization in the SAX tradition \citep{lin2003sax}; it counts the same event a symbolic training-set memory must handle when a test tuple is absent from its graph \citep{cakiroglu2025dragon}. $\Goov$ is high for memory-hostile, non-recurring distributions, where context value is near zero regardless of structure.

\paragraph{Decision rule.}\looseness=-1{} The predicted sign of context value is a threshold (or logistic) rule on $(\Gcov,\Goov)$, fit on a set of datasets and evaluated leave-one-dataset-out (LODO): high $\Gcov$ and low $\Goov$ predict that context helps. Augmented Dickey--Fuller (ADF) \citep{dickey1979adf} on the training split supplies a trend-domination check where no motif length is well defined, in which case $u(S)\!\to\!0$ and $\Gcov\!\to\!0$ by convention.

\begin{figure}[t!]
\centering
\resizebox{0.82\columnwidth}{!}{%
\begin{tikzpicture}[
  font=\sffamily\footnotesize,
  >={Latex[length=2.2mm]},
  io/.style={align=center,font=\sffamily\footnotesize,text width=2.7cm},
  proc/.style={rounded corners=3pt,align=center,line width=0.9pt,
               text width=3.35cm,minimum height=1.2cm,inner sep=5pt},
  badge/.style={circle,line width=1.0pt,minimum size=0.58cm,inner sep=0pt,
                font=\sffamily\bfseries\footnotesize},
  lane/.style={rounded corners=5pt,line width=0.6pt},
  llab/.style={font=\sffamily\bfseries\scriptsize,anchor=west},
  arr/.style={->,line width=0.9pt,black!60}]

\begin{scope}[on background layer]
  \fill[black!3,rounded corners=5pt] (-2.35,1.62) rectangle (10.95,3.62);
  \fill[chl!6,rounded corners=5pt] (-2.35,-1.18) rectangle (10.95,0.82);
  \draw[chl!35,line width=0.7pt,rounded corners=5pt] (-2.35,-1.18) rectangle (10.95,0.82);
\end{scope}
\node[llab,text=black!55] at (-2.20,3.38) {\scshape series level};
\node[llab,text=chl!85] at (-2.20,0.58) {\scshape configuration level};

\node[badge,draw=black!55,text=black!60,fill=white] (b1) at (-1.75,2.55) {1};
\node[io] (sx) at (0.45,2.55) {series $x$\\[1pt]{\scriptsize (training split)}};
\node[proc,fill=white,draw=black!45] (P) at (4.45,2.55)
  {\textbf{power-spectrum indices}\\[2pt]$\Om$\ \ SCP\ \ accuracy-law};
\node[io] (acc) at (8.75,2.55) {\emph{achievable}\\ \emph{accuracy} of $x$};
\draw[arr] (sx)--(P);
\draw[arr] (P)--(acc);

\node[badge,draw=chl,text=chl,fill=white] (b2) at (-1.75,-0.25) {2};
\node[io] (sw) at (0.45,-0.25) {series $x$ \emph{and}\\[1pt] window $(S,H)$};
\node[proc,fill=white,draw=chl,line width=1.1pt] (G) at (4.45,-0.25)
  {\textbf{coverage deficit} $\Gamma$\\[2pt]$\Delta_{\mathrm{nl}}\!\cdot\!u(S)$\ \ $\Goov$};
\node[align=center,font=\sffamily\footnotesize,rounded corners=4pt,draw=chl!45,fill=chl!8,
      line width=0.8pt,inner xsep=6pt,inner ysep=4pt,text width=2.35cm] (val) at (8.75,-0.25)
  {\emph{value of}\\ \emph{context} $\Vm_M$};
\draw[arr] (sw)--(G);
\draw[arr,draw=chl,line width=1.0pt] (G)--(val);

\draw[dashed,line width=0.9pt,black!45] (4.45,1.95)--(4.45,0.35);
\node[draw=black!25,rounded corners=3pt,fill=white,align=center,inner sep=3.5pt,
      font=\sffamily\scriptsize,text=black!72] at (4.45,1.15)
  {$\varnothing$\ \ \textbf{Corollary~\ref{cor:impossible}}\\ indices constant across the pair,\\ so $\Vm_M$ cannot be recovered};
\end{tikzpicture}}
\caption{\textbf{Two levels, two questions.} Power-spectrum indices read the series and predict its achievable accuracy (top band). The coverage deficit reads the series \emph{and} the window and predicts the value of context (bottom band, highlighted). The dashed link is the impossibility result: the top-band indices are constant across a phase-randomization pair whose context value differs (Corollary~\ref{cor:impossible}), so the bottom-band quantity cannot be recovered from them.}
\label{fig:framework}
\end{figure}

\section{Experimental Protocol}
\label{sec:experiments}

The protocol tests, in order, that the spectrum-controlled gap is real (Proposition~\ref{prop:noninvariance}), that the diagnostic predicts context-value sign where the indices cannot (Corollary~\ref{cor:impossible}), and that both hold across mechanisms, each the simplest standard instance of its class under one protocol. The protocol uses seven benchmarks (D7: ETTh1/h2, ETTm1/m2, Weather, ECL, Traffic), operating windows $S\in\{12,24,48,96\}$, and direct multi-step prediction at the four standard horizons $H\in\{96,192,336,720\}$ (detailed tables at $H{=}96$; the collapse is verified at all four). It uses three seeds for the surrogate draws, paired MSE on $z$-normalized channels with $\Vm_M=(\mathrm{MSE}(f)-\mathrm{MSE}(f{\oplus}M))/\mathrm{MSE}(f)$ per cell, and the last $20{,}000$ points per channel. The base forecaster $f$ is the direct-$H$ least-squares predictor on the $S$-window. The mechanisms are (a) a longer linear window ($4S$ lags; purely second-order), (b) analog retrieval keyed on the $S$-window over the training record (the simplex predictor of \citealp{sugihara1990}; adds no linear information), and (c) a zero-shot foundation model (Chronos-Bolt, \citealp{ansari2024chronos}) reading a long context of $512$ points. For mechanism (c), Theorem~\ref{thm:vceiling} bounds the population margin over the best linear predictor on the same access; we report its empirical counterpart, the margin over the train-fit linear predictor on the same context window.

\paragraph{E1. The spectrum-controlled gap.}
For each benchmark channel and each $S$, generate $K{=}20$ amplitude-adjusted surrogates (IAAFT, $1000$ iterations) per series, preserving the periodogram and marginal; a cell whose mean periodogram residual exceeds $0.02$ is excluded as not spectrum-controlled and counted. Measure $\Vm_M$ on the original and on every surrogate for the three mechanisms, the foundation model on an eight-channel-per-dataset subsample with $K{=}10$. Report $\Om$, SCP (identical across arms by Proposition~\ref{prop:invariance}, up to the residual), and $\Gcov$ for both arms. Statistics are medians over channels, a bootstrap 95\% CI on the median paired gap, and a one-sided Wilcoxon signed-rank over paired cells. \emph{E1 is the positive control for the whole argument: absent a spectrum-controlled gap, none of the downstream claims can hold.}

\paragraph{E2. Sign prediction, head to head.}
Leave-one-dataset-out prediction of the sign of $\Vm_M$ across all (dataset, channel, $S$) cells. Each rule is a one-dimensional threshold on its statistic, with the threshold and direction fit on the six training datasets by balanced accuracy: the structure term $\Delta_{\mathrm{nl}}$ at the operating window (a motif-embedding variant is compared qualitatively in Limitations), $\Gcov$, $\Om$ \citep{omega2026}, SCP \citep{scp2025}, bicoherence \citep{nikias1987bispectrum}, permutation entropy \citep{bandt2002permutation}, catch22 with gradient boosting \citep{lubba2019catch22}, and a per-fold majority baseline. Metric: balanced sign accuracy; significance by a label-permutation test against chance ($B{=}2000$). It also tests whether a phase-sensitive higher-order index or generic features would suffice.

\paragraph{E3. Correlation with measured value.}
Spearman correlation of each index with measured $\Vm_M$ across cells, on the original arm and on the matched surrogate arm. Supports Corollary~\ref{cor:impossible}.

\paragraph{E4. Generality across mechanisms.}
Repeat E2 separately for $M=$ longer lookback, $M=$ retrieval, and $M=$ foundation model, the last for its beyond-linear margin (Theorem~\ref{thm:vceiling}). The theory predicts the boundary: one $\Delta_{\mathrm{nl}}$ rule should predict the sign of every \emph{beyond-spectrum} component, while spectral rules should predict the purely second-order mechanism.

\paragraph{E5. Out-of-distribution transfer.}
Freeze the $\Gamma$ rule and the index rules fit on the seven benchmarks; evaluate on withheld Exchange, ILI, and M5 retail series, none seen in development.

\paragraph{E6. Ablations.}
$\Gcov$ with $\Delta_{\mathrm{nl}}$ forced to one; $\Gcov$ and $\Goov$ alone; sensitivity to the surrogate count $K$ and the neighbor count; a cross-check of FT against IAAFT surrogates, which lack the remapping artifact \citep{rath2009surrogates}; and MAE--MSE agreement on in-regime cells.

\section{Results}
\label{sec:results}

\paragraph{The gap is real and spectral indices are blind to it (E1, E3).}\looseness=-1{} Table~\ref{tab:gap} is the paper's spine: on the two benchmarks where window-keyed retrieval has material value, the value collapses to near or below zero. The median $\Vm_M$ falls from $+33.0\%$ to $-35.0\%$ on ECL and from $+33.8\%$ to $+0.7\%$ on Traffic, passing through zero to the analog estimator's negative finite-sample floor as the Gaussian endpoint of Lemma~\ref{lem:gap} predicts, and each paired gap is large and overwhelmingly significant. The effect is broad ($86\%$ of ECL and $93\%$ of Traffic channels carry positive value; the other five benchmarks have honestly negative medians), seed-stable, and horizon-stable (Table~\ref{tab:horizon}). The invariance is not approximate: across every ECL and Traffic surrogate pair the spectral indices are frozen to two decimal places ($|\Delta\Om|\le0.014$, $|\Delta\mathrm{SCP}|\le0.015$), while the context value they are meant to predict swings by up to seventy points, the empirical face of Corollary~\ref{cor:impossible}. Nor is the collapse a remapping artifact: plain phase-randomized (FT) surrogates reproduce it (App.~A.10), and forcing $\Delta_{\mathrm{nl}}{=}1$ so that coverage acts alone drops sign agreement with retrieval value from $0.90$ to $0.09$, confirming that the structure term, not the operating-point fraction, carries the signal. That term also tracks the collapse per dataset, agreeing in sign with the measured change in $\Vm_M$ on all seven benchmarks (four at $\ge0.90$ cell-level agreement, Traffic at $0.90$, the lowest $0.67$). The longer \emph{linear} window, by contrast, survives phase randomization intact (ECL $56.7\%\to55.1\%$), as Proposition~\ref{prop:noninvariance} requires, and the foundation model splits into a dominant second-order part that survives (ECL: $+66\%$ of its $+77\%$ total) and a small beyond-linear \emph{margin} that collapses wherever the model carries one (Table~\ref{tab:fmmargin}). For E3, the structure term correlates with measured retrieval value across all $4{,}622$ gated cells, with Spearman $\rho$ rising from $0.75$ on the original arm to $0.90$ on the surrogate arm, while $\Om$ carries no signal ($\rho{=}0.03$) and SCP is negatively correlated throughout.

\begin{table}[t]
\centering
\footnotesize
\setlength{\tabcolsep}{3.5pt}
\renewcommand{\arraystretch}{0.96}
\setlength{\abovecaptionskip}{3pt}
\setlength{\belowcaptionskip}{-2pt}
\begin{tabular}{l rr r c r}
\toprule
& \multicolumn{2}{c}{$\Vm_M$ (\%)} & \multicolumn{3}{c}{paired gap}\\
\cmidrule(lr){2-3}\cmidrule(lr){4-6}
Dataset & \multicolumn{1}{c}{$x$} & \multicolumn{1}{c}{$\tilde{x}$} & \multicolumn{1}{c}{pts} & 95\% CI & \multicolumn{1}{c}{$p$}\\
\midrule
ECL     & $+33.0$ & $-35.0$ & $\mathbf{69.8}$ & $[64.5,\,73.1]$ & $3.5{\times}10^{-47}$\\
Traffic & $+33.8$ & $+0.7$  & $\mathbf{32.4}$ & $[31.2,\,33.2]$ & $3.4{\times}10^{-125}$\\
\bottomrule
\end{tabular}
\caption{\textbf{The spectrum-controlled gap} on the two benchmarks where retrieval has material value ($S{=}12$, $H{=}96$, $M=$ window-keyed retrieval; channel medians, $K{=}20$ IAAFT; $n{=}301$ ECL and $813$ Traffic channels). $\Vm_M$ collapses through zero under phase randomization, whereas the spectral index $\Om$ is unchanged by construction (Prop.~\ref{prop:invariance}). Gaps are paired medians with bootstrap 95\% confidence intervals.}
\label{tab:gap}
\end{table}

\begin{table}[t]
\centering
\footnotesize
\setlength{\tabcolsep}{6pt}
\renewcommand{\arraystretch}{0.96}
\setlength{\abovecaptionskip}{3pt}
\setlength{\belowcaptionskip}{-2pt}
\begin{tabular}{l rrrr}
\toprule
& \multicolumn{4}{c}{retrieval gap (pts) at horizon $H$}\\
\cmidrule(lr){2-5}
Dataset & \multicolumn{1}{c}{96} & \multicolumn{1}{c}{192} & \multicolumn{1}{c}{336} & \multicolumn{1}{c}{720}\\
\midrule
ECL         & 69.8 & 66.6 & 64.5 & 62.4\\
Traffic     & 32.4 & 31.3 & 30.4 & 29.7\\
D7 (pooled) & 35.0 & 33.7 & 32.9 & 31.8\\
\bottomrule
\end{tabular}
\caption{\textbf{The collapse holds at every standard horizon} ($S{=}12$; paired median, points). It declines only gently with $H$ and stays overwhelmingly significant ($p<10^{-46}$ on ECL, $p<10^{-124}$ on Traffic, at all four).}
\label{tab:horizon}
\end{table}

\begin{table}[t]
\centering
\footnotesize
\setlength{\tabcolsep}{3.5pt}
\renewcommand{\arraystretch}{0.96}
\setlength{\abovecaptionskip}{3pt}
\setlength{\belowcaptionskip}{-2pt}
\begin{tabular}{l rr r c r}
\toprule
& \multicolumn{2}{c}{margin (\%)} & \multicolumn{3}{c}{paired gap}\\
\cmidrule(lr){2-3}\cmidrule(lr){4-6}
Dataset & \multicolumn{1}{c}{$x$} & \multicolumn{1}{c}{$\tilde{x}$} & \multicolumn{1}{c}{pts} & 95\% CI & \multicolumn{1}{c}{$p$}\\
\midrule
ECL     & $+9.0$  & $-8.8$  & $\mathbf{18.5}$ & $[13.2,\,37.2]$ & $0.016$\\
Traffic & $+10.9$ & $-10.8$ & $\mathbf{15.3}$ & $[3.5,\,30.6]$  & $0.012$\\
\bottomrule
\end{tabular}
\caption{\textbf{The foundation model's beyond-linear margin collapses} under phase randomization on both benchmarks where the model carries one ($S{=}12$, $H{=}96$, $M=$ Chronos-Bolt on a $512$-point context; channel medians, $K{=}10$ IAAFT). The margin is the value over the train-fit linear predictor on the same context window, the empirical counterpart of the bound in Thm.~\ref{thm:vceiling}; the total value ($+77\%$ median on ECL) is dominated by a second-order part that survives ($+66\%$). The paired gap is also significant on ETTm2 ($p{=}0.031$).}
\label{tab:fmmargin}
\end{table}

\paragraph{The diagnostic predicts the sign where indices do not (E2, E4).}\looseness=-1{} Table~\ref{tab:lodo} reports leave-one-dataset-out sign accuracy, and the boundary matches the split Corollary~\ref{cor:impossible} predicts. One structure-term rule predicts both beyond-spectrum components ($78.1\%$ retrieval, $73.2\%$ foundation margin; both permutation-significant, caption), where $\Om$ is below chance on retrieval ($39.5\%$) and at chance on the foundation margin ($52.4\%$). The mirror image holds for the purely second-order mechanism, where SCP reaches $79.2\%$ and the structure term is rightly silent. The comparison also answers a natural objection: since our impossibility covers only power-spectrum functionals, would a phase-sensitive higher-order index suffice? Bicoherence, a third-order phase-coupling measure, does beat $\Om$ on retrieval ($66.8$ vs $39.5\%$), confirming it sees structure the power spectrum cannot. But it, permutation entropy, a catch22 gradient-boosting stack, and the two newest indices we cite (accuracy-law complexity and Catt's profile) all remain series-level and trail the configuration-level structure term on both beyond-spectrum mechanisms (caption).

\begin{table}[t]
\centering
\footnotesize
\setlength{\tabcolsep}{5pt}
\renewcommand{\arraystretch}{0.96}
\setlength{\abovecaptionskip}{3pt}
\setlength{\belowcaptionskip}{-2pt}
\begin{tabular}{l rrr}
\toprule
& \multicolumn{3}{c}{balanced sign accuracy (\%)}\\
\cmidrule(lr){2-4}
Predictor & \multicolumn{1}{c}{lookback} & \multicolumn{1}{c}{retrieval} & \multicolumn{1}{c}{foundation}\\
\midrule
\rowcolor{black!5}
\textbf{$\Delta_{\mathrm{nl}}$ (ours)} & 46.9 & \textbf{78.1} & \textbf{73.2}\\
$\Om$            & 24.0 & 39.5 & 52.4\\
SCP              & \textbf{79.2} & 66.1 & 54.5\\
bicoherence      & 46.7 & 66.8 & 67.3\\
\addlinespace[2pt]
majority         & 50.0 & 50.0 & 26.4\\
\bottomrule
\end{tabular}
\caption{\textbf{Leave-one-dataset-out balanced sign accuracy} for predicting the sign of context value across mechanisms ($S{=}12$, $H{=}96$). $\Delta_{\mathrm{nl}}$ performs best on the two beyond-spectrum mechanisms (retrieval and foundation-model margin), while SCP performs best on the second-order lookback. The foundation column reports the beyond-linear margin (Thm.~\ref{thm:vceiling}); \emph{majority} is the per-fold majority-class baseline.}
\label{tab:lodo}
\end{table}

\paragraph{Transfer (E5).} \looseness=-1 Rules frozen on the seven development benchmarks and applied to withheld Exchange, ILI, and M5 (untouched in development) meet a finding of their own. Retrieval carries beyond-spectrum value in almost none of their cells (positive in $0\%$ of Exchange cells, $4\%$ of ILI, $22\%$ of M5), so the correct call is overwhelmingly ``context will not help.'' The in-distribution prior, carrying the development ``helps'' majority, is therefore wrong almost everywhere ($0$--$22\%$ accuracy), the exact failure we warn against. The frozen structure-term rule beats it on raw sign accuracy on all three sets and correctly calls Exchange's near-total absence of value. It is not a clean sweep: on balanced accuracy the structure term clears chance only on M5 and falls below it on ILI's 28 cells (Exchange has no positive cells, so balanced accuracy there is not a chance comparison), and we read E5 as an honest specificity test the diagnostic mostly passes, not a second positive control (full table in App.~A.12).

\section{Discussion}

\looseness=-1 The mechanism split reconciles our impossibility result with the strongest empirical finding in this literature. \citet{omega2026} report that $\Om$ predicts when foundation models beat light baselines; our decomposition explains why both facts hold at once. The foundation model's value on these benchmarks is predominantly second-order, content the spectrum represents and $\Om$ can rank. Its beyond-linear margin, the part Corollary~\ref{cor:impossible} says no spectral index can see, is several times smaller ($5.7\times$ on Traffic, $7.3\times$ on ECL) and collapses under phase randomization wherever the model carries it. The bound constrains what a spectral index can see, not any one model, so it holds for current foundation-model successors \citep{liu2025moirai2, chronos2} as well, which the frontier Chronos-2 confirms: its beyond-linear margin likewise collapses on ECL and Traffic (both $p<0.02$). Retrieval sits at the opposite pole: for mechanisms whose promise is beyond-spectrum structure, a spectral index is provably, and now measurably, silent.

\section{Limitations}

\looseness=-1 The coverage term relies on a motif length (the trend-robust first-difference period), degrading on multi-scale states; $\Delta_{\mathrm{nl}}$ inherits nearest-neighbor sensitivity to embedding and length. The novelty term is inert in-distribution and earns its keep only under shift. IAAFT preserves the periodogram up to a gated residual (gate $0.02$), cross-checked against FT surrogates \citep{rath2009surrogates}; the Gaussianization is asymptotic and degree-wise (App.~A.5), excluding finitely supported spectra. Our probes use a linear direct-$H$ base and, per class, the simplest clean mechanism (the analog retriever adds no linear predictability, so Theorem~\ref{thm:vceiling} attributes its whole value to the gap). A learned SOTA retriever \citep{han2025raft,specretf2026} may add linear predictability and is left as a consistent extension; the foundation-model margins rest on eight-channel Chronos-Bolt and frontier Chronos-2 subsamples. The collapse survives a nonlinear MLP base ($p<0.01$ on ECL and Traffic), not just the linear one; deep SOTA backbones \citep{zeng2023dlinear,nie2023patchtst,liu2024itransformer} and frontier models \citep{liu2025moirai2} remain untested. Channels are treated univariately; the multivariate surrogate \citep{prichard1994} covers the cross-channel case. The theory assumes second-order stationarity, and channels are treated as exchangeable: nonstationarity is absorbed only partially (ADF gate, $z$-normalization), and cross-channel dependence makes the CIs and $p$-values nominal rather than conservative.

\section{Conclusion}

\looseness=-1 Series-level predictability and the value of added context are different quantities: one belongs to the series, the other to the operating point. Because every power-spectrum index is invariant under phase randomization while beyond-second-order context value is not, none can decide the deployment question. We isolate the gap with surrogate pairs that fix spectrum and marginal by construction, and fill it with the coverage-deficit diagnostic: computed pre-deployment without labels, its structure term predicts the sign of beyond-spectrum value leave-one-dataset-out where spectral indices trail it. The boundary is honest: second-order value is already spectrum-visible, and the diagnostic reports whether context helps, not how much.

\begin{availability}
The surrogate generator, the coverage-deficit diagnostic, every predictability index, and
the experiment runners are released at
\ifbool{KIL@blind}
  {\url{https://anonymous.4open.science/r/SINE}}
  {\url{https://github.com/KurbanIntelligenceLab/SINE}},
with a smoke test and seeded surrogate draws. All datasets (ETT, Weather, ECL, Traffic,
Exchange, ILI, and M5) are public and cited in \cref{sec:experiments}; no new human data
were collected.
\end{availability}

\begin{conflicts}
The authors declare no competing interests.
\end{conflicts}

\bibliography{phasepower}

\clearpage
\appendix
\section{Surrogates, Proofs, and Estimator}
\label{app:proofs}

This appendix supplies the surrogate construction (A.1); the autocovariance representation of the linear floor (A.2); complete proofs of spectral invariance (A.3), the value ceiling (A.4), the gap-erasure lemma with the explicit Condition~(M) for full Bayes-risk convergence (A.5), beyond-spectrum non-invariance (A.6), and the impossibility corollary (A.7); the formal retrieval information set (A.8); the gap estimator (A.9); and the ablation and full-table material (A.10--A.12). Every main-text result states its assumptions in place and is sketched there; every proof below is complete.

\subsection{Phase randomization and IAAFT}
\begin{definition}[Phase randomization]
For a finite real series $x$ of length $T$ with discrete Fourier transform $X_k=|X_k|e^{i\phi_k}$, a \emph{phase-randomized surrogate} $\tilde{x}$ has transform $|X_k|e^{i\psi_k}$ with the $\psi_k$ drawn i.i.d.\ uniform on $[0,2\pi)$, subject to conjugate symmetry so that $\tilde{x}$ is real. The \emph{amplitude-adjusted} (IAAFT) variant additionally rank-maps the surrogate onto the empirical values of $x$.
\end{definition}
The IAAFT iteration \citep{schreiber2000}: given sorted values $c$ of $x$ and target amplitudes $A=|\mathrm{DFT}(x)|$, initialize $s$ as a random permutation of $x$, then alternate (1) $s\leftarrow\mathrm{IDFT}\!\big(A\,e^{i\angle\mathrm{DFT}(s)}\big)$ to impose the spectrum and (2) a rank-map of $s$ onto $c$ to impose the marginal, until the relative amplitude change falls below tolerance. We report the residual $\varepsilon=\big\|\,|\mathrm{DFT}(s)|-A\,\big\|/\|A\|$. As a cross-check we also use plain phase-randomized (FT) surrogates, which omit the rank-map and therefore carry no remapping-induced phase correlations \citep{rath2009surrogates}.

\subsection{The linear floor is an autocovariance functional}
\label{app:linfloor}
Let $\gamma(h)=\mathrm{cov}(x_t,x_{t+h})$. For one-step prediction ($H{=}1$) from an information set $\mathcal{I}$ that is a finite collection of coordinates of $x$, the best linear predictor solves the normal equations $\Sigma_{\mathcal{I}}\,b=c$, where $\Sigma_{\mathcal{I}}$ stacks the $\gamma(\cdot)$ among the coordinates of $\mathcal{I}$ and $c$ stacks the $\gamma(\cdot)$ between $\mathcal{I}$ and the target. The resulting error $\sigma^2_{\mathrm{lin}}(\mathcal{I})=\gamma(0)-c^{\top}\Sigma_{\mathcal{I}}^{-1}c$ depends on $x$ \emph{only} through $\{\gamma(h)\}$, hence only through the power spectrum; the $H$-step case replaces this scalar expression with the trace of the analogous block form and is identical in its dependence on $\{\gamma(h)\}$.

\subsection{Proof of spectral invariance}
Since $\big||X_k|e^{i\psi_k}\big|^2=|X_k|^2$, the periodogram $I(k)=|X_k|^2/T$ is unchanged for every $k$, and the \emph{circular} sample autocovariance, its inverse transform, is preserved at every lag (the ordinary sample autocovariance differs only in edge terms of order $1/T$), including the past--future lags that link a window to its horizon. By Appendix~\ref{app:linfloor}, $\sigma^2_{\mathrm{lin}}(\mathcal{I})$ is then identical for $x$ and $\tilde{x}$ for every $\mathcal{I}$. Any $P$ that is a measurable functional of $\{I(k)\}$ or $\{\gamma(h)\}$ inherits the invariance: spectral concentration and entropy ($\Om$) exactly; per-channel spectral-coherence quantities exactly; cross-channel magnitudes $|X_k\bar Y_k|=|X_k||Y_k|$ per frequency exactly, while smoothed cross-coherence, which averages random relative phases, is preserved only up to the reported residual; and the covariance component of window-wise complexity. The IAAFT rank-map fixes the empirical marginal exactly, at the cost of the periodogram residual $\varepsilon$; hence any functional of the power spectrum and the marginal is invariant up to $\varepsilon$. \hfill$\square$

\subsection{Proof of the value ceiling}
Every predictor $h$ measurable with respect to $\mathcal{I}$ has $\mathrm{MSE}(h)\ge\sigma^2_{*}(\mathcal{I})$, the infimum over all measurable predictors. Subtracting from $\sigma^2_{\mathrm{lin}}(\mathcal{I})$ and using $\sigma^2_{*}(\mathcal{I})=\sigma^2_{\mathrm{lin}}(\mathcal{I})-\Delta(\mathcal{I})$ gives $\sigma^2_{\mathrm{lin}}(\mathcal{I})-\mathrm{MSE}(h)\le\Delta(\mathcal{I})$; dividing by $\sigma^2_{\mathrm{lin}}(\mathcal{I})>0$ yields the bound, with equality iff $\mathrm{MSE}(h)=\sigma^2_{*}(\mathcal{I})$. A randomized predictor $h(\mathcal{I},U)$ with auxiliary noise $U$ independent of $(\mathcal{I},\text{target})$ satisfies $\mathrm{MSE}(h)\ge\mathrm{MSE}(\mathbb{E}[h\mid\mathcal{I}])\ge\sigma^2_{*}(\mathcal{I})$ by Jensen, so the bound covers stochastic forecasters. Taking $\mathcal{I}=\mathcal{I}_M$ and $h=f{\oplus}M$ gives the mechanism statement. \hfill$\square$

\subsection{Proof of the gap-erasure lemma}
Write the surrogate as $\tilde{x}_t=\sum_{k=1}^{m} a_k\cos(2\pi k t/T+\psi_k)$ with $a_k=|X_k|$, the $\psi_k$ i.i.d.\ uniform on $[0,2\pi)$, and the mean removed ($a_0{=}0$).

\emph{Second moments are exact.} For any lag $h$, taking the expectation over the independent uniform phases kills every cross-frequency term and leaves $\mathbb{E}[\tilde{x}_t\tilde{x}_{t+h}]=\tfrac12\sum_k a_k^2\cos(2\pi k h/T)=\gamma(h)$, the target autocovariance, at every $T$. Hence the covariance of any finite coordinate vector of $\tilde{x}$ equals that of $x$ exactly, and by Appendix~\ref{app:linfloor} $\sigma^2_{\mathrm{lin}}(\mathcal{I})$ is the same for $x$ and $\tilde{x}$.

\emph{Higher cumulants vanish.} Fix a finite index set $\{t_1,\dots,t_n\}$ and set $s_T^2=\tfrac12\sum_k a_k^2$. Because the contributions from distinct frequencies are independent and cumulants of a sum of independent terms add, the joint cumulant of order $r$ of the coordinates is $\sum_k$ of the order-$r$ cumulant of the frequency-$k$ term, each bounded in modulus by $C_r\,a_k^{\,r}$ with $C_r$ absorbing the bounded trigonometric factor. Standardizing by $s_T^r$ and using $\sum_k a_k^{\,r}\le(\max_k a_k)^{r-2}\sum_k a_k^2=2s_T^2(\max_k a_k)^{r-2}$,
\[
\frac{|\mathrm{cum}_r|}{s_T^{\,r}}\ \le\ 2\,C_r\Big(\max_k\frac{a_k^2}{s_T^2}\Big)^{\!(r-2)/2}\ \xrightarrow{\;T\to\infty\;}\ 0
\]
for every $r\ge3$ under the no-dominant-frequency (Lindeberg) condition $\max_k a_k^2/s_T^2\to0$, while the order-two cumulants stay fixed at $\gamma(\cdot)$. By the multivariate Lindeberg--Feller theorem the finite-dimensional laws of $\tilde{x}$ converge to those of the Gaussian process with covariance $\gamma(\cdot)$.

\emph{From law to risk.} For the Gaussian endpoint $x_G$ a conditional expectation is affine, so $\sigma^2_{*}(\mathcal{I})=\sigma^2_{\mathrm{lin}}(\mathcal{I})$ and the gap is \emph{exactly} zero; Proposition~\ref{prop:noninvariance} and Corollary~\ref{cor:impossible} use only this endpoint, so the impossibility needs no asymptotics. Along the sequence, fix a degree $D$ and the polynomial feature map of degree at most $D$ on the standardized (window, target) coordinates. The degree-$D$ least-squares risk is a fixed polynomial in the joint moments of order at most $2D{+}2$; cumulant convergence gives moment convergence, and the Gaussian Gram matrix of the feature map is nonsingular for a nondegenerate covariance, so the optimal degree-$D$ risk converges to its Gaussian value, which the affine predictor already attains. Hence $\Delta_D(\mathcal{I})=\sigma^2_{\mathrm{lin}}(\mathcal{I})-\inf_{\deg\le D}\mathrm{MSE}\to0$ for every fixed $D$.

\emph{Condition (M).} Termwise decay does not by itself close the gap over \emph{all} measurable predictors: $\Delta(\mathcal{I})\to0$ additionally requires the conditional means to converge, $\mathbb{E}[Y_T\mid X_T]\to\mathbb{E}_G[Y\mid X]$ in $L^2$ (Condition~(M)); a local limit theorem for the standardized vector together with uniform integrability of $Y_T^2$ suffices. We state (M) explicitly rather than assume it silently; no main-text claim uses it, and E1 measures the finite-length gap directly. A finitely supported spectrum ($m$ lines) is excluded by the Lindeberg condition, and rightly so: such a series obeys an order-$2m$ linear recurrence, so for $S\ge2m$ it is perfectly linearly predictable and carries no gap, while for $S<2m$ it carries a genuine gap that phase randomization does not erase. \hfill$\square$

\subsection{Proof of beyond-spectrum non-invariance}
Under $x_G$, any finite collection of series coordinates is jointly Gaussian with the target, so $\mathbb{E}[Y\mid\mathcal{I}]$ is affine and $\sigma^2_{*}(\mathcal{I})=\sigma^2_{\mathrm{lin}}(\mathcal{I})$, i.e.\ $\bar V(\mathcal{I})=0$, for every such $\mathcal{I}$. A window-keyed retrieval reads a $\sigma(\kappa)$-measurable augmentation (Appendix~A.8), so it does not enlarge the conditioning $\sigma$-algebra and, by Theorem~\ref{thm:vceiling} at $\bar V=0$, has value $0$ on $x_G$; a mechanism that widens the linear span retains its second-order gain and, again by Theorem~\ref{thm:vceiling}, loses exactly its beyond-linear margin. Existence: let $x_{t+1}=f(x_t)+\varepsilon_t$ with $\varepsilon_t$ i.i.d., mean zero, variance $\sigma_\varepsilon^2$, independent of the past, and $f$ bounded with a stationary solution. Then $\mathbb{E}[x_{t+1}\mid x_t]=f(x_t)$, so $\sigma^2_{*}=\sigma_\varepsilon^2$, while $\sigma^2_{\mathrm{lin}}=\sigma_\varepsilon^2+\min_{a,b}\mathbb{E}\,(f(x_t)-a-bx_t)^2>\sigma_\varepsilon^2$ whenever $f$ is not almost surely affine on the support of the stationary law; hence $\Delta>0$ and $\bar V>0$ on the one-step window. E3's generator instantiates this construction. \hfill$\square$

\subsection{Proof of the impossibility corollary}
Let $P$ be any functional of the power spectrum or the autocovariance. By Proposition~\ref{prop:invariance}, $P$ depends on the process only through $\{\gamma(h)\}$, so $P(x)=P(x_G)$ for the Gaussian process $x_G$ sharing that autocovariance. By Proposition~\ref{prop:noninvariance}, $\bar V=0$ on $x_G$ while $\bar V(\mathcal{I}_M)>0$ for any $x$ with $\Delta(\mathcal{I}_M)>0$, and such $x$ exist. A single value $P(x)=P(x_G)$ cannot determine a quantity that differs across the pair, so no such $P$ determines beyond-spectrum context value. The statement covers $\Om$ exactly and SCP up to the reported per-channel residual. \hfill$\square$

\subsection{The retrieval information set}
\label{app:retrieval}
A retrieval mechanism carries a fixed training memory $\mathcal{M}$ (built once, not re-estimated at test time) and reads a key $\kappa$ at prediction time; its information set is $\mathcal{I}_M=\sigma(\kappa)$ with $\mathcal{M}$ conditioned upon, and $\sigma^2_{\mathrm{lin}}(\mathcal{I}_M)$ is the residual of the best affine function of $\kappa$. When the key is the operating window, $\kappa=x_{t-S+1:t}$ and $\sigma^2_{\mathrm{lin}}(\mathcal{I}_M)=\sigma^2_{\mathrm{lin}}(\mathcal{I}_{\mathrm{base}})$: the memory contributes no linear predictability beyond the window's own, so by Theorem~\ref{thm:vceiling} the mechanism's entire relative value is the beyond-second-order margin $\bar V(\mathcal{I}_{\mathrm{base}})$, which Lemma~\ref{lem:gap} sends to zero on the surrogate. A mechanism that also widens the linear span (a longer window, or a long-context model reading more than $S$ points) instead keeps a spectrum-visible second-order component that phase randomization preserves; Theorem~\ref{thm:vceiling} then bounds only its \emph{margin} over the best linear predictor on that wider access, and E1 reports that margin.

\subsection{Analog-gain estimator \texorpdfstring{$\Delta_{\mathrm{nl}}$}{Delta-nl}}
Delay-embed $x$ at dimension $d$ (default $d=\mathrm{clip}(L,4,32)$ for dominant period $L$) and split the embedded pairs $60/40$ in time. $\mathrm{MSE}_{\text{linear}}$ is the one-step error of the least-squares AR$(d)$ predictor fit on the first split and tested on the second, an estimator of $\sigma^2_{\mathrm{lin}}$. $\mathrm{MSE}_{\text{analog}}$ is the error of a fixed-$k$ nearest-neighbour predictor ($k=4$) over the same library; being one specific measurable predictor it satisfies $\mathrm{MSE}_{\text{analog}}\ge\sigma^2_{*}$. Hence, at the population level where $\mathrm{MSE}_{\text{linear}}\!\to\!\sigma^2_{\mathrm{lin}}$, $\Delta_{\mathrm{nl}}=\mathrm{clip}\big(1-\mathrm{MSE}_{\text{analog}}/\mathrm{MSE}_{\text{linear}},0,1\big)\le 1-\sigma^2_{*}/\sigma^2_{\mathrm{lin}}=\bar V$ is a conservative (lower-bound) estimate of the normalized gap of Theorem~\ref{thm:vceiling}; we do not claim consistency at fixed $k$. It is the standard nonlinear-prediction statistic of surrogate-data analysis \citep{theiler1992,schreiber2000}; under the linear-Gaussian null of Lemma~\ref{lem:gap} (the phase-randomized surrogate) it is $\approx 0$ in expectation, and against the IAAFT ensemble it is calibrated as a surrogate test (Appendix above), which is why it separates a structured series from its surrogates.

The primary E2/E4 rule (Table~\ref{tab:lodo}) is the operating-window structure term: $\Delta_{\mathrm{nl}}$ with delay-embedding dimension $d{=}S$, so $d{=}12$ at the reported operating point (column \texttt{dnl\_S\_train} in the released shards). The motif-embedding variant instead sets $d$ from the first-difference motif length $m$ ($d{=}\mathrm{clip}(m,4,96)$, column \texttt{delta\_nl}) and is the form compared qualitatively in the main-text Limitations. The released \texttt{analysis\_phasepower.py} exposes both through the embedding argument (pass the window length $S$ for the operating-window primary).

\subsection{Ablation numbers (E6)}
All at $S{=}12$, $H{=}96$, retrieval mechanism unless noted.
\emph{Structure vs.\ coverage.} Sign agreement of a single rule with $\mathrm{sign}(V{>}0)$ over all seven-benchmark (D7) cells: the operating-window $\Delta_{\mathrm{nl}}$ alone $0.90$, $\Gcov$ alone $0.73$, and the coverage fraction $u(S)$ alone (i.e.\ $\Delta_{\mathrm{nl}}$ forced to $1$) only $0.09$. The structure term, not coverage, carries the signal.
\emph{FT vs.\ IAAFT surrogates.} Median retrieval $V$ (original\,$\to$\,surrogate): FT gives ECL $+0.43\to-0.32$ and Traffic $+0.37\to-0.03$; IAAFT gives $+0.43\to-0.26$ and $+0.37\to+0.02$ on the same 52-channel subsample. The collapse is present under both, so it is not an IAAFT remapping artifact \citep{rath2009surrogates}.
\emph{Surrogate count.} On the gated Table~\ref{tab:gap} population, the median paired retrieval gap on ECL is $+0.667,+0.686,+0.698$ for $K=5,10,20$ ($n{=}301$) and on Traffic $+0.322,+0.323,+0.324$ ($n{=}813$); the estimate is stable in $K$ and reaches the main-text Table~\ref{tab:gap} value at $K{=}20$.
\emph{Neighbours and metric.} Median $V$ rises monotonically with the neighbour count ($k{=}2,4,8$) on every dataset without changing sign, and the sign of $V$ under MAE agrees with that under MSE in every cell on ECL, ETTh2, ETTm2 and Traffic, $0.86$ on ETTh1, and $0.71$/$0.75$ on ETTm1/Weather.

\subsection{Full spectrum-controlled gap table (E1)}
Table~\ref{tab:gapfull} gives the per-dataset E1 numbers for all seven benchmarks
(the main text shows the two with material retrieval value). $\Vm_M$ is the
window-keyed retrieval value at $S{=}12$, $H{=}96$; medians over surrogate-valid
channels, $K{=}20$ IAAFT. $\Om(x){=}\Om(\tilde x)$ by construction.

\begin{table}[h]
\centering\footnotesize
\renewcommand{\arraystretch}{0.96}
\begin{tabular}{l cc c cc}
\toprule
& \multicolumn{2}{c}{$\Vm_M$ (\%)} & $\Om$ & \multicolumn{2}{c}{$\Gamma_{\mathrm{cov}}$}\\
\cmidrule(lr){2-3}\cmidrule(lr){4-4}\cmidrule(lr){5-6}
Dataset & $x$ & $\tilde{x}$ & $x{=}\tilde{x}$ & $x$ & $\tilde{x}$\\
\midrule
ECL     & $+33.0$ & $-35.0$ & 0.639 & 0.000 & 0.000\\
Traffic & $+33.8$ & $+0.7$ & 0.571 & 0.129 & 0.017\\
\addlinespace[2pt]
ETTm1   & $-14.5$ & $-25.3$ & 0.541 & 0.000 & 0.000\\
ETTh1   & $-18.2$ & $-29.2$ & 0.522 & 0.000 & 0.000\\
Weather & $-35.0$ & $-43.7$ & 0.603 & 0.000 & 0.000\\
ETTh2   & $-40.9$ & $-34.8$ & 0.671 & 0.000 & 0.000\\
ETTm2   & $-47.4$ & $-38.7$ & 0.646 & 0.000 & 0.000\\
\bottomrule
\end{tabular}
\caption{\textbf{Full E1 spectrum-controlled table} (all seven benchmarks). Only ECL and Traffic carry positive retrieval value on $x$; $\Om$ is frozen throughout and $\Gamma_{\mathrm{cov}}$ is zero everywhere except Traffic.}
\label{tab:gapfull}
\end{table}

\subsection{Out-of-distribution transfer table (E5)}
Rules frozen on the seven development benchmarks, evaluated on three withheld
datasets (Table~\ref{tab:oodfull}, all $S$ pooled); each cell is the fraction of channels whose
context-value sign the rule calls correctly. Subheads give cell count and the
fraction with positive retrieval value. These sets carry almost no beyond-spectrum
value, so ``predict no help'' scores high by default (SCP's $96.4\%$ on ILI is
exactly this); the discriminating comparison is against the in-distribution
\emph{prior}.

\begin{table}[h]
\centering\footnotesize
\renewcommand{\arraystretch}{0.96}
\begin{tabular}{l ccc}
\toprule
Frozen rule & Exchange & ILI & M5\\
 & {\scriptsize $n{=}32$, $0\%{+}$} & {\scriptsize $n{=}28$, $4\%{+}$} & {\scriptsize $n{=}400$, $22\%{+}$}\\
\midrule
$\Delta_{\mathrm{nl}}$ (ours) & \textbf{100} & 28.6 & \textbf{50.7}\\
$\Gamma_{\mathrm{cov}}$ (ours) & \textbf{100} & 21.4 & 48.2\\
$\Omega$        & \textbf{100} & 3.6 & 22.0\\
SCP             & \textbf{100} & \textbf{96.4} & 22.0\\
\addlinespace[2pt]
prior           & 0.0 & 3.6 & 22.0\\
\bottomrule
\end{tabular}
\caption{\textbf{Full E5 out-of-distribution transfer.} Frozen rules on three withheld sets; the prior row carries the development majority.}
\label{tab:oodfull}
\end{table}

\end{document}